\title{Health Facility Location in Ethiopia: Leveraging LLMs to Integrate Expert Knowledge into Algorithmic Planning }
\author{Yohai Trabelsi}
\affiliation{
  \institution{Department of Data Science and AI, School of Computer Science,\\
  Ariel University}
  \city{Ariel}
  \country{Israel}}
\email{yohai.trabelsi@gmail.com}
\author{Guojun Xiong}
\affiliation{
  \institution{John A. Paulson School of Engineering and Applied Sciences,
  Harvard University}
  \city{Cambridge, MA}
  \country{United States}}
\email{xionggj1@gmail.com}
\author{Fentabil Getnet}
\affiliation{
  \institution{National Data Management and Analytics Center for Health, Ethiopian Public Health Institute}
  \city{Addis Ababa}
  \country{Ethiopia}}
  \email{b.infen4ever@gmail.com}
\author{St\'{e}phane Verguet}
\affiliation{
  \institution{Department of Global Health and Population, Harvard T.H. Chan School of Public Health}
  \city{Boston, MA}
  \country{United States}}
\email{verguet@hsph.harvard.edu}
\author{Milind Tambe}
\affiliation{
  \institution{John A. Paulson School of Engineering and Applied Sciences,
  Harvard University}
  \city{Cambridge, MA}
  \country{United States}}
\email{milind_tambe@harvard.edu}
\begin{abstract}
% Ethiopia’s Ministry of Health is continually working to upgrade health posts in order to expand access to essential services, particularly in rural areas.
% Given limited resources, however, careful choices are needed about which posts to upgrade to maximize population coverage while balancing diverse prioritization criteria from (possibly) multiple experts and stakeholders.
% To address this challenge, we propose, in collaboration with the Ethiopian Public Health Institute, a framework that systematically integrates expert knowledge with optimization techniques.

Ethiopia’s Ministry of Health is upgrading health posts to improve access to essential services, particularly in rural areas. Limited resources, however, require careful prioritization of which facilities to upgrade to maximize population coverage while accounting for diverse expert and stakeholder preferences.
In collaboration with the Ethiopian Public Health Institute and Ministry of Health, we propose a hybrid framework that systematically integrates expert knowledge with optimization techniques.
Classical optimization methods provide theoretical guarantees but require explicit, quantitative objectives, whereas stakeholder criteria are often articulated in natural language and difficult to formalize.
To bridge these domains, we develop the Large language model and Extended Greedy (LEG) framework. Our framework combines a provable approximation algorithm for population coverage optimization with LLM-driven iterative refinement that incorporates human-AI alignment to ensure solutions reflect expert qualitative guidance while preserving coverage guarantees. Experiments on real-world data from three Ethiopian regions demonstrate the framework’s effectiveness and its potential to inform equitable, data-driven health system planning.
% Classical optimization methods with theoretical guarantees typically require a clearly defined objective, such as the population coverage ratio relevant in our setting. Yet stakeholder requirements are often expressed in natural language, making them subjective and difficult to formalize.
% Large Language Models (LLMs) are increasingly used to optimize such criteria; while efficient, they often cannot provide guarantees on solution quality.
% To bridge this gap, we introduce a hybrid approach that combines an approximation algorithm for optimizing population coverage with LLM-driven iterative refinement to align with stakeholder preferences while preserving coverage. 
% Our design provides theoretical guarantees on coverage while promoting alignment with expert advice.
% We demonstrate its practical effectiveness through extensive evaluations on real-world data from three Ethiopian regions. 

\end{abstract}
\keywords{Health Facility Location, Optimization, Human Expert Knowledge, Human-AI Alignment, LLM}
\newtheorem{remark}{Remark}
\newcommand{\BibTeX}{\rm B\kern-.05em{\sc i\kern-.025em b}\kern-.08em\TeX}
\begin{document}

%%% The following commands remove the headers in your paper. For final 
%%% papers, these will be inserted during the pagination process.

\pagestyle{fancy}
\fancyhead{}

%%% The next command prints the information defined in the preamble.

\maketitle 

%%%%%%%%%%%%%%%%%%%%%%%%%%%%%%%%%%%%%%%%%%%%%%%%%%%%%%%%%%%%%%%%%%%%%%%%
%\input{introduction}
\section{Introduction}
Ensuring equitable access to essential health services remains a central challenge for Ethiopia’s Ministry of Health (MOH). Since the launch of the Health Extension Program (HEP) in 2003/04 \cite{wang2016ethiopia},  the country has made significant progress in expanding basic care delivery to rural populations. The HEP Optimization Roadmap (2020–2035)\cite{MoH_Ethiopia_2020_roadmap} further envisions upgrading selected health posts into comprehensive ones that can provide advanced %maternal and neonatal 
services such as childbirth and postnatal care (see Figure~\ref{fig:hp}). However, upgrading facilities is expensive, and the available public budget is severely constrained ~\cite{choo2026optimizing}. Determining which facilities to upgrade thus becomes a complex optimization problem involving limited resources, heterogeneous population needs, and diverse stakeholder opinions.

\begin{figure}
    \centering
    \includegraphics[width=0.45\linewidth]{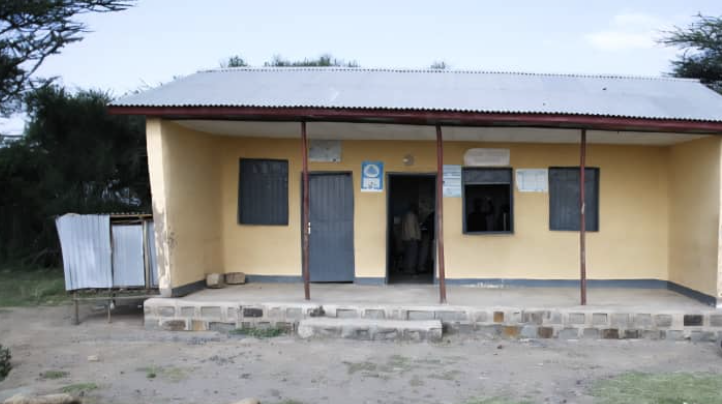}
    \includegraphics[width=0.49\linewidth]{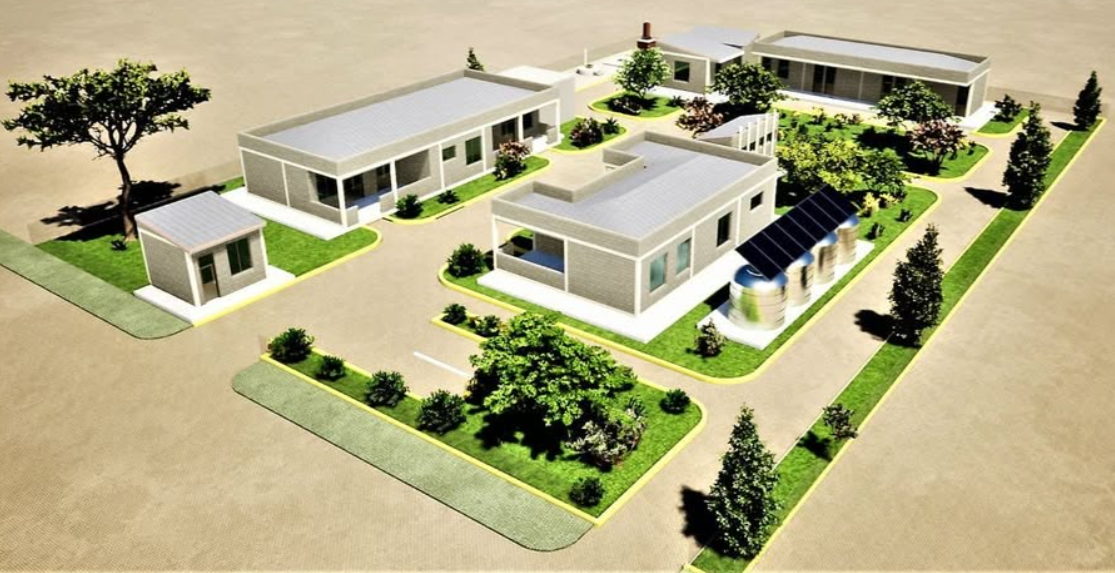}
        \caption{Basic health post (left); Design of a comprehensive health post that provides more essential services (right). Source: Ministry of Health, Ethiopia.} 
    \Description{Left: basic health post; right: comprehensive health post offering more essential services. HEP MOH technical report.}        
    \label{fig:hp}
    
\end{figure}

\begin{figure}[htb]
     \centering
    \includegraphics[width=0.49\linewidth]{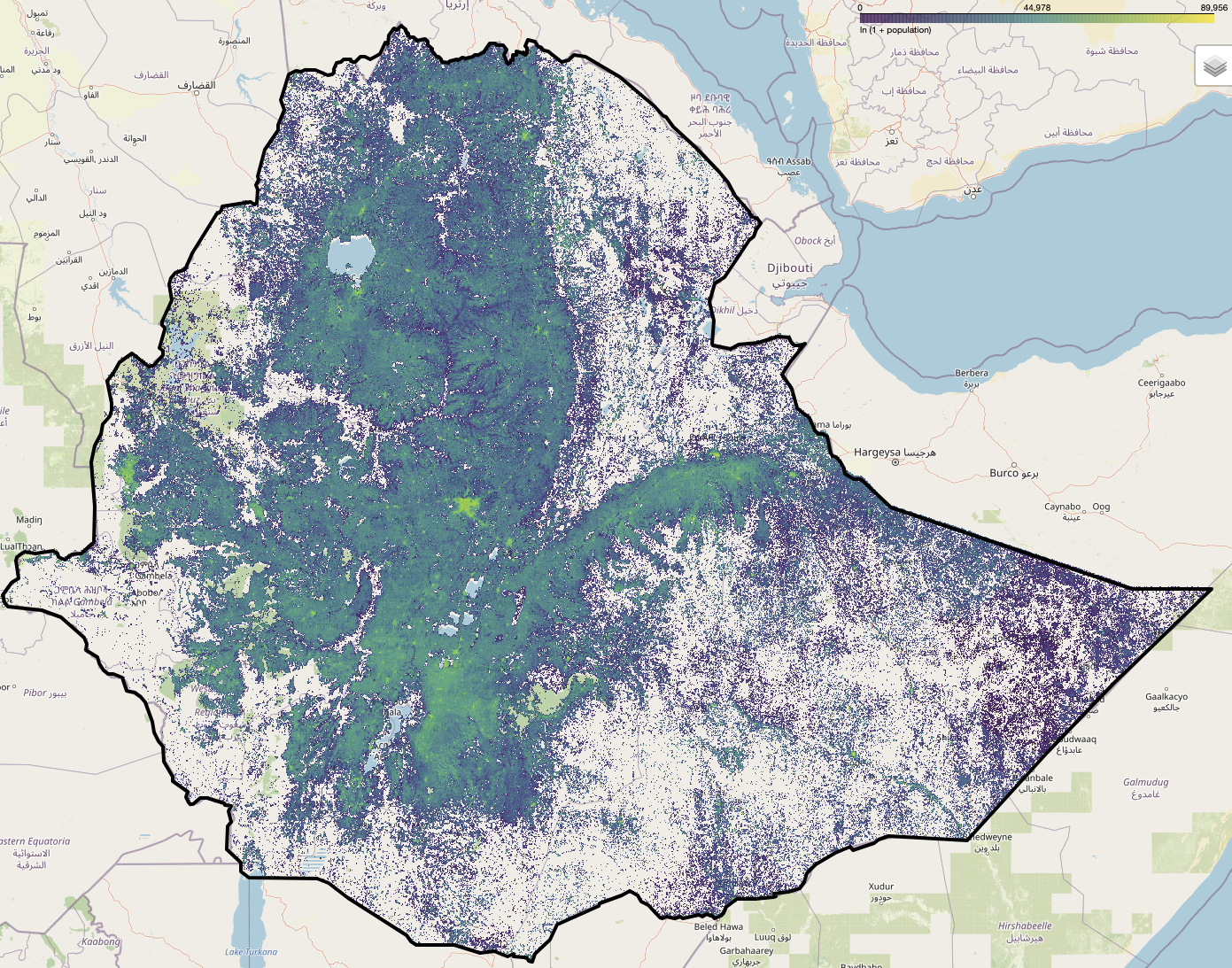}
     \includegraphics[width=0.49\linewidth]{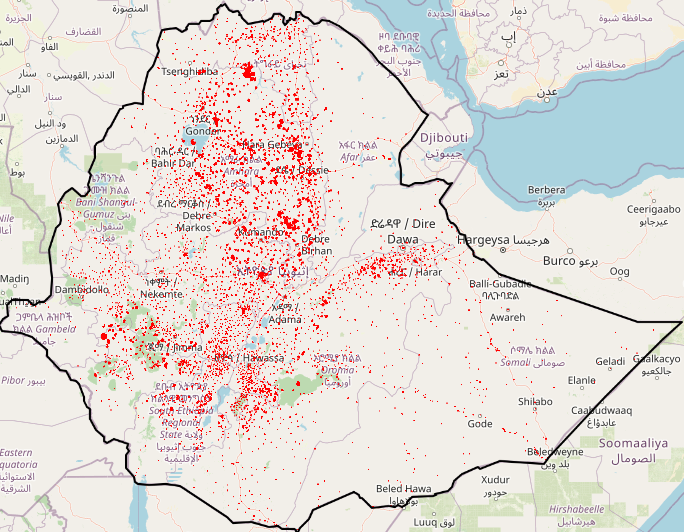}
     \caption{(\cite{choo2026optimizing}) Map of Ethiopia overlaid with projected population estimates for 2026 (log scale). Brighter yellow areas on the left indicate higher population densities. The map on the right shows the locations of health facilities capable of providing essential health services (red points). Many populations currently lack 2-hour access to such facilities \cite{hendrix2023estimated}.}

     \Description{Map of Ethiopia overlaid with projected 2026 population estimates (log scale).}
     \label{fig:intro-figure}
\end{figure}

Recent studies have applied optimization and geospatial methods to identify locations where comprehensive facilities are needed ~\cite{choo2026optimizing,alemu2025optimizing,haileamlak2023ethiopian}. These efforts primarily focus on maximizing population coverage under distance and capacity constraints (See Figure~\ref{fig:intro-figure}). Yet, in practice, the final allocation decisions in the stepwise construction of comprehensive health posts remain dominated by expert judgment and stakeholder negotiation, rather than by algorithmic outputs.
While human expertise captures important contextual knowledge—such as terrain accessibility or local socio-political considerations—it is often expressed in natural language and difficult to encode into a mathematical objective function \cite{xiong2025vortex,christiano2017deep}. 
It is possible that even the advisors themselves may struggle to provide their preferences as a fully coherent set of recommendations.

Recent advances in Large Language Models (LLMs)~\cite{xiong2025vortex,behari2024decision} offer a promising avenue for bridging the gap between qualitative human judgment and quantitative optimization.
LLMs can interpret and structure unformalized expert advice, enabling algorithms to incorporate contextual and domain-specific reasoning that traditional models often overlook.
Nevertheless, these methods typically lack formal theoretical guarantees on performance or stability—an issue that is especially critical in high-stakes domains such as public health or infrastructure planning.
In the absence of such guarantees, language-based systems risk producing allocations that appear reasonable linguistically but fail to satisfy fairness, transparency, or policy-alignment criteria required for real-world adoption.

To address this gap, we develop a hybrid framework that couples algorithmic optimization with language-based expert reasoning. Our approach builds upon classical submodular maximization, ensuring provable guarantees on coverage, while leveraging LLMs to interpret and iteratively integrate human advice expressed in text. The LLM acts as a bridge between formal optimization and informal domain knowledge—translating verbal recommendations into structured allocation adjustments that preserve theoretical performance bounds. This integration allows the system to remain both rigorous and human-aligned.

We demonstrate the framework in collaboration with the Ethiopian Public Health Institute and Ministry of Health, focusing on three regions. Empirical results show that LLM-guided iterations significantly improve the alignment between algorithmic allocations and expert advice, while maintaining high coverage efficiency. Beyond this specific application, the proposed method provides a general blueprint for embedding qualitative human preferences into quantitative decision-making processes.

\subsection{Process description}
\label{sec:exmpl}
In Figure~\ref{fig:scheme} we present the proposed method in a high level.
The process begins with the problem inputs, an initial greedy allocation, and a list of advice sentences. These inputs are provided to a large language model (e.g., Gemini, ChatGPT), which iteratively refine the allocation strategy. The model proposes an allocation that is optimized using a guided greedy procedure parameterized by $(\alpha, \beta)$. 
The resulting outcome is then evaluated, and the evaluation feedback is transformed into a new prompt for further refinement. This iterative loop continues until the maximum number of iterations is reached, producing the final optimized allocation.
\begin{figure}
    \centering
        \includegraphics[width=0.9\linewidth]{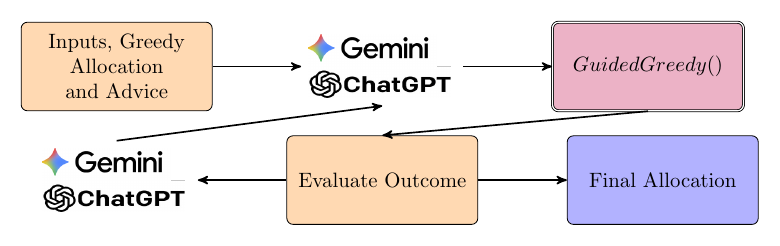}
        \caption{Overview of the proposed method.}
        \Description{Flow diagram with labeled boxes connected by arrows, showing inputs, processing steps, evaluation, and final allocation output.}
    \label{fig:scheme}
\end{figure}

\subsection{Contributions}
In collaboration with the Ethiopian Public Health Institute and Ministry of Health, this work introduces the unified LEG framework for upgrading health facilities in Ethiopia that jointly optimizes population coverage and alignment with expert guidance.
Our main contributions are summarized as follows:

$\triangleright$ \emph{LLM-integrated multi-objective optimization.} We propose a novel framework that embeds LLMs into a submodular optimization pipeline, enabling the joint optimization of quantitative coverage objectives and qualitative human advice.

$\triangleright$ \emph{Theory–practice trade-off with controllable guarantees.} We formalize a tunable mechanism to balance provable approximation guarantees and adherence to expert-driven recommendations.

$\triangleright$ \emph{Online budget adaptability.} We extend the framework to handle sequential or dynamically arriving budgets, a critical requirement for multi-year public-sector planning.

$\triangleright$ \emph{Real-world validation in Ethiopia.} We conduct extensive experiments on three regional datasets (Afar, Somali, and Benishangul-Gumuz), demonstrating that our method maintains strong coverage performance while significantly improving interpretability and stakeholder alignment.
 
\section{Related Works}
\subsection{Plan-and-execute}

Many works are discussing the plan-and-execute approach where a high-level plan is computed before the execution of the tasks (e.g., \cite{yang2024agentoccam, zhang2025webpilot,sun2023adaplanner}).
Some of these works \cite{hu2025divide,erdogan2025plan} explore LLM-based hierarchical planning. 
However, their hierarchical design primarily serves to simplify complex problems, whereas our setting is closer to a multi-objective framework- requiring both high-level (abstract) performance and low-level (concrete) performance to be valuable independently.

\subsection{Multi-objective optimization}

The field of multi-objective optimization~(for overviews, see \cite{roijers2013survey,peitz2025multi}) provides a principled foundation for handling competing objectives and evaluating trade-offs among them. A widely adopted strategy is weighted scalarization~\cite{miettinen1999nonlinear,hayes2022practical, kimnavigating}, which collapses multiple objectives into a single composite objective using pre-specified weights. Despite its broad applicability, scalarization presents a long-standing difficulty: effectively traversing the Pareto frontier typically demands manual or iterative tuning of these weights to reveal different trade-off solutions.
To overcome this limitation, recent work such as \cite{xiong2025vortex} proposes a language-driven framework for exploring Pareto-optimal solutions. Instead of explicitly setting scalarization coefficients, an LLM interprets verbal feedback and adjusts the reward structure accordingly, enabling a more natural and user-centric interaction with the optimization process.
In contrast, our approach operates within a hierarchically coordinated framework. Here, the LLM and the solver interact symbiotically—the LLM contributes high-level reasoning and structural insights, while the solver provides robust low-level optimization capabilities. This interplay allows the system to retain theoretical rigor while integrating expert intuition in a seamless and interpretable manner.

\subsection{Using LLMs for reward shaping}
Recent research has increasingly explored the use of LLMs for reward shaping and specification~\cite{ma2024eureka,xiong2025vortex}.
Early systems such as Eureka~\cite{ma2024eureka} demonstrated that LLMs can automatically synthesize reward functions directly from natural-language descriptions of desired behaviors.
Similarly, \cite{kwon2023reward} shows that language interfaces can simplify the design of proxy rewards through interactive refinement.
The Decision-Language Modeling  framework~\cite{behari2024decision,verma2025balancing} extends this paradigm to societal domains, using LLMs to propose reward formulations for public-health resource allocation.
\cite{kim2025preference} proposed a method to fine-tune LLM to efficiently translate natural-language human preferences into reward functions.
Although these approaches highlight the expressive power of language-driven reward design, they often risk distorting the underlying task utility, since optimization is performed purely with respect to human-specified or textual preferences.

To mitigate this issue, \cite{xiong2025vortex} integrates language-derived shaping rewards into an existing solver’s intrinsic objective, maintaining the base utility while progressively improving alignment with user intent.
Yet, the requirement to explicitly construct a numerical reward function for every form of expert advice remains a practical bottleneck—translating nuanced guidance into precise reward formulations can itself demand significant effort and domain expertise.

\subsection{Health facility location}
Many studies have addressed health facility location problems (see, e.g., the survey by \cite{ahmadi2017survey}).
\cite{wang2023research} examined the allocation of emergency health facilities while incorporating expert preferences into their model.
\cite{flores2021optimizing} examined the placement of health facilities in the Philippines, explicitly considering trade-offs between equity and efficiency in their optimization approach.
\cite{shariff2012location} optimized clinic locations in Kuala Langat, Malaysia, to maximize population coverage within 3–5 km.
Other studies, such as \cite{alemu2025optimizing, choo2026optimizing}, focused on optimizing health facility locations in Ethiopia.
However, most of these studies do not account for alignment with human expert preferences.

\subsection{Human-AI alignment}
Recent work has focused on aligning AI decisions with human expertise (see \cite{ji2023ai, shen2024towards} for surveys) and aligning LLMs
to better suit human-oriented tasks and expectations (see \cite{wang2023aligning} for a survey). 
Our work contributes to the literature on aligning algorithmic decision-making with human preferences (e.g., \cite{ng2000algorithms, abbeel2004apprenticeship, xiong2025vortex}). 
However, whereas most existing approaches rely on the explicit construction of a numerical reward function, which is not always practical, our method enables LLMs to directly guide the allocation process, bypassing the need for an intermediate reward-function construction step.

\section{Settings and Problem Formulation}

\subsection{Settings}
%{\color{blue}[GJ: I guess we should merge Section 3 and Section 4 as a single section, i.e., System Modle and Problem Formulation.]}

Let $r \in \mathbb{N}_{>0}$ denote the number of districts.  
Let the ground set be $V = T_1 \uplus \ldots \uplus T_r$, where each $T_i$ represents the set of candidate grid cells in district $i$, and $\uplus$ denotes a disjoint union (See the example in Section \ref{sec:exmpl}). 
For simplicity, we assume that all grid cells in these districts are candidates for locating a facility.\footnote{This assumption simplifies exposition; our framework remains valid even if only a subset of cells within each district are eligible, as the optimization operates on any predefined candidate set.}  
A \textbf{grid-cell allocation} is any subset $S \subseteq V$.  

For any grid-cell allocation $S \subseteq V$, we define the \emph{district allocation} $h(S) \in \mathbb{N}^r$ by
$$
h_i(S) = |S \cap T_i|, \quad \text{for } i = 1, \dots, r,
$$
so that $h_i(S)$ represents the number of facilities placed in district $i$.  
We denote by $b$ the total budget, i.e., the number of health posts to be upgraded in a given year.

\paragraph{Coverage function.}
Let $\mathrm{covered}(S)$ be the set of grid cells served by the facilities in $S$.  
The total coverage associated with $S$ is defined as
\begin{equation}
\label{eq:ethiopia-objective-f}
f(S) = \sum_{c \in \mathrm{covered}(S)} w_c,
\end{equation}
where $w_c$ denotes the population of grid cell $c$.  
This monotone submodular function quantifies the cumulative population covered by the selected facilities.

\paragraph{Advice alignment function.}
Because the notion of “alignment with expert advice’’ is inherently qualitative, defining a corresponding numerical objective is nontrivial.  
We represent this alignment through a function $g : 2^V \to \mathbb{R}$ that assigns a score to each grid-cell allocation, reflecting how well the allocation adheres to the provided human or LLM-generated advice.  
(Here, $2^V$ denotes the set of all possible subsets of $V$.)

For evaluation, we approximate this alignment function using a set of advice sentences $A$.  
For each advice sentence $a \in A$, we define an auxiliary scoring function $g_a : 2^V \to \mathbb{R}$ that measures the extent to which the advice $a$ is satisfied.  
An LLM is used to define these component functions and aggregate them into an overall alignment metric:
\begin{equation}
\label{eq:ethiopia-objective-g}
g_{\mathrm{eval}}(S) = \sum_{a \in A} g_a\big(h(S)\big).
\end{equation}
This construction serves as a proxy for human evaluation, enabling language-based advice to be incorporated into a quantitative optimization framework. 
%{\color{blue}[Is there a way for us to illustrate the $g$ function?]}

%{\color{blue}[Any intuition here for how to define the alignment function?]}
%{\color{red}[Tried to fix]}
\subsection{Problem formulation}

\paragraph{The multi-objective problem.}
Given the functions $f$ and $g$, and a total budget $b$, we define the following multi-objective formulation that jointly optimizes \emph{population coverage} and \emph{alignment with expert advice}:
\begin{equation}
\label{eq:ethiopia-multi-obj}
\max_{S \subseteq V,\, |S| = b} \big\{ f(S),\, g(S) \big\}.
\end{equation}
This formulation captures the fundamental trade-off between quantitative performance (coverage) and qualitative consistency (alignment).

The formulation in \eqref{eq:ethiopia-multi-obj} defines an idealized bi-objective optimization problem that simultaneously maximizes population coverage $f(S)$ and alignment with expert advice $g(S)$. 
However, since the problem is intractable (there is no natural way to scalarize the two objectives) it is often necessary to provide guarantees with respect to the more  measurable coverage objective. 
To make the problem operational, we introduce a constrained bi-objective relaxation that preserves theoretical guarantees on coverage while allowing flexible adaptation to both coverage and advice alignment.

\paragraph{The $\alpha$--$\beta$ guarantee problem.}
To provide theoretical guarantees while maintaining flexibility in alignment, we introduce two control parameters $\alpha, \beta \in [0,1]$. 
Let $OPT_b$ denote the optimal allocation of size $b$ that maximizes the coverage function $f$. 
We then define the constrained optimization problem as:
\begin{align}
\nonumber
&\max_{S \subseteq V,\, |S| = b} \big\{ f(S),\, g(S) \big\},\\[2mm]
&\text{s.t.} \quad f(S) \ge (1 - e^{-\alpha \beta}) \, f(OPT_b).
\label{eq:ethiopia-alpha-beta}
\end{align}
The constraint ensures that the final allocation retains at least a fraction $(1 - e^{-\alpha \beta})$ of the optimal coverage value.  
By adjusting $\alpha$ and $\beta$, one can explicitly control the balance between maintaining strong theoretical coverage guarantees and achieving closer alignment with expert guidance.  
The parameter $\alpha$ defines the proportion of selections made by the greedy approach relative to the LLM-based approach, while $\beta$ controls the greedy approach’s flexibility to deviate from the LLM allocation.
The detailed operational meaning of these parameters is discussed further in Section~\ref{sec:method}.

\begin{remark}[Challenges of multi-objective optimization]
Classical scalarization methods---such as weighted sums or Pareto front exploration~\cite{miettinen1999nonlinear,hayes2022practical}---require predefined numerical weights for each objective, which are often unavailable when stakeholder preferences are expressed in natural language.  
Moreover, these approaches treat objectives as static, whereas expert opinions may evolve iteratively during the planning process.  
Our formulation differs by explicitly integrating language-based feedback within the optimization loop, thereby allowing the trade-off between coverage and alignment to be dynamically adjusted while preserving provable performance guarantees.
\end{remark}

\section{Proposed Method}
\label{sec:method}

\subsection{Algorithm overview}
Our framework integrates optimization algorithms with LLM-driven reasoning to balance two competing objectives: maximizing population coverage and aligning with qualitative expert advice as in Eq. \eqref{eq:ethiopia-multi-obj}.  
The overall process proceeds iteratively across five stages (Algorithm~\ref{alg:1}),
alternating between algorithmic updates that ensure theoretical guarantees and language-based refinements that incorporate human guidance.
At a high level, it contains 5 steps, i.e.,
Step 1: A classical greedy algorithm produces an initial coverage-maximizing (low level) grid-cell  allocation.  
Step 2: Given the grid-cell allocation, an LLM produces a (high-level) district allocation using expert advice expressed in natural language.  
Step 3: A constrained greedy procedure refines the district level allocation into a grid-cell allocation while maintaining an $\alpha$--$\beta$ coverage guarantee.  
Step 4: The LLM receives structured feedback to improve alignment.  
Step 5: The prompts themselves are iteratively optimized through a form of “prompt gradient descent.”  
We detail each step below.

\paragraph{\textbf{Step 1: Initial grid-cell Allocation (Greedy in line 3 of Algorithm~\ref{alg:1})}}
We first allocate facilities to grid-cells using the standard greedy algorithm of \citet{nemhauser1978analysis}, which provides a $(1 - 1/e)$-approximation for maximizing monotone submodular functions.  
The resulting allocation of facilities to grid-cells, $S_0$, serves as the initial baseline.  
From this grid-cell allocation, we extract the district allocation $h(S_0)$, which encodes how many facilities are assigned to each district (without the information about the exact cells), and will be used to guide the subsequent district-level updates.

\paragraph{\textbf{Step 2: LLM-Powered district  allocation (line 6 of Algorithm~\ref{alg:1})}}

Next, we refine the district allocation $d=h(S)$ using an LLM that processes both quantitative and qualitative contextual information.  
The model considers the current and previous district allocations, budget, population statistics, and a set of expert advice sentences $\bA$.  
To maintain stability, we restrict the LLM to modify the districts of at most two facilities per iteration.
%We now present the prompt used in this step:
The full prompt used in this step is in Section~\ref{app:prmpts} of the Appendix.

This step produces a revised district allocation that respects both expert intent and practical feasibility.  
A full description of the prompting strategy is provided in the Appendix \ref{app:prmpts}.

\paragraph{\textbf{Step 3: GuidedGreedy (line 7 of Algorithm~\ref{alg:1};Algorithm~\ref{alg:2})}}
We consider the district allocation $d$, as a vector of per-district budget and perform a guided greedy selection (Algorithm~\ref{alg:2}) to determine specific facility locations.  
In line 3 of Algorithm~\ref{alg:2}, we initialize the set $\hat{S}$ as an empty set.

In the loop on lines 4--12 of Algorithm~\ref{alg:2}, the algorithm greedily adds grid-cells to the set $\hat{S}$. 
On lines 5--6, we compute the cells with the maximum marginal gains, either ignoring or considering the district allocation $d$, respectively.  
The condition in line 8 ensures that at least $\lceil \alpha b\rceil$ cells attain a marginal gain of at least $\beta$ times the maximum marginal gains computed with no restrictions (i.e., ignoring $d$).
A running example demonstrating Algorithm~\ref{alg:2} is given in Table~\ref{tbl:1}.

After obtaining the set $S_i$, we compute the values $f(S_i)$ and per-district contributions $f(S_i \cap T_j)$ for all $1\le j \le r$.  
These values constitute the quantitative feedback signals, $\Delta f$, and $\Delta h$ which are then used to guide the next LLM iteration.

\begin{table}[h]

    \centering
    \small 
    \setlength{\tabcolsep}{3pt}
    \caption{Running example of Algorithm~\ref{alg:2}, inputs: $\alpha=0.25, \beta=0.5, d=\{1:3,2:0\}$ \label{tbl:1}; columns: \textit{Iter} = algorithm iteration, \textit{$|S|$}= size of grid-cell allocation at iteration start, \textit{$d$}= per district budgets at iteration start, $c(L5)$ and $c_d(L6)$ = values in lines 5 and 6, \textit{L8 Check} = condition outcome (line 8), \textit{Action} = selected action (line 9 or 12).
 }
    \begin{tabular}{|c|c|c|c|c|l|l|}
        \hline
        \textbf{Iter} & \textbf{$|S|$} & \textbf{$d$}& \textbf{$c$ (L5)} & \textbf{$c_d$ (L6)} & \textbf{L8 Check} & \textbf{Action} \\
        \hline
        1 & 0 & $\{1:3,2:0\}$& $10$ & $4$ & $0 > 0$? \textbf{No} & Pick $c$ ($10; T_2$) \\
          &   &   &   &     & $4 \ge 0.5\cdot 10$? \textbf{No} & \\
        \hline
        2 & 1 &$\{1:3,2:0\}$ & $8$ & $4$ & $1 > 0$? \textbf{Yes} & Pick $c_d$ ($4; T_1$) \\
        \hline
        3 & 2 &$\{1:2,2:0\}$ & $8$  & $3$ & $2 > 0$? \textbf{Yes} & Pick $c_d$ ($3; T_1$) \\
        \hline
    \end{tabular}
\end{table}

\paragraph{\textbf{Step 4: Verbal Reinforcement via scalar, and vectors' Comparison. (lines 8-9 of Algorithm~\ref{alg:1})}}
We quantify the change in both coverage and allocation between iterations by computing:
\begin{equation}
\label{eq:delta}
\Delta f = f(S_{i+1}) - f(S_i), \quad \Delta h = h(S_{i+1}) - h(S_i).
\end{equation}
These differences form the basis of a new prompt to the LLM, requesting verbal feedback to improve future alignment.  
Specifically, the $Feedback$ in the prompt instructs the LLM to (i) describe the observed differences and (ii) propose adjustments that enhance both alignment with expert advice and overall coverage.

\paragraph{\textbf{Step 5: Prompt optimization. (line 10 of Algorithm~\ref{alg:1})}}
Following \cite{xiong2025vortex}, we replace conventional parameter-space gradient descent with an analogous update in the prompt space.  
We decompose each prompt into a static and editable part:
\begin{equation}
\label{eq:prompt1}
\text{Prompt}_i = P_{\text{Fix}} \,\|\, P_{\text{Editable},i},
\end{equation}
where $P_{\text{Fix}}$ encodes the task template and $P_{\text{Editable},i}$ accumulates iteration-specific refinements.  
After receiving new feedback, we update
\begin{equation}
\label{eq:prompt2}
P_{\text{Editable},i+1} = P_{\text{Editable},i} + Feedback_{i+1},
\end{equation}
yielding the new prompt $\text{Prompt}_{i+1} = P_{\text{Fix}} \,\|\, P_{\text{Editable},i+1}$.  
This procedure gradually improves prompt quality and alignment consistency over time.
The high level of the suggested approach is given in Algorithm~\ref{alg:1}.

%{\color{blue}[Can we put the prompts here? ]}

%{\color{blue}[Can we also draw an illustration figure for the entire pipeline?]}

\begin{algorithm}
\caption{LLM-Enhanced District Resource Allocation}
\begin{algorithmic}[1]
\State \textbf{Input:} Budget $b$, advice set $\mathbf{A}$, balance parameters $\alpha,\beta \in [0,1]$, available grid-cells $V$, prompts $Prompt_1$, $Prompt_{reflection}$.
\State \textbf{Output:} Grid-cell allocation $S_{limit}$.
\State (Init) $S_0\gets Greedy(b,V)$
\State (Init) Update $Prompt_1$ with $S_0$, $h(S_0)$ and the other inputs.
\For{$i = 1$ to limit}
\State (LLM) $d\gets$ LLM(Prompt$_i$)
\State (Solve)  $S_i \gets GuidedGreedy(\alpha,\beta, b, V,\emptyset, d)$.
\State (Compare) Compute differences $\Delta f$, $\Delta h$ (Eq.~\ref{eq:delta}) and update $Prompt_{reflection}$.
\State (Verbal Feedback) Query LLM to generate verbal reflection.
\State (Optimize Prompt) Update Prompt$_{i+1}$, based on Eq. \eqref{eq:prompt1}, \eqref{eq:prompt2}.
\EndFor
\State \textbf{Return:} $S_{limit}$
\end{algorithmic}
\label{alg:1}
\end{algorithm}

\begin{algorithm}
\caption{\textsc{GuidedGreedy}$(\alpha,\beta,b,V,d)$}
\begin{algorithmic}[1]
\State \textbf{Input:} Parameters $\alpha,\beta \in [0,1]$; budget $b\in \mathbb{N}_{>0}$; available cells $V$;existing district allocation $S$, district budgets $d \in \mathbb{N}^r$.
\State \textbf{Output:} Grid-cell allocation $\hat{S}$.
\State $\hat{S}\gets S$
\While{$|\hat{S}| < b$}
    \State $c \gets \arg\max_{v \in V}\big(f(\hat{S} \cup \{v\}) - f(\hat{S}) \big)$
    \State $c_d \gets \arg\max_{v \in \bigcup_{j: d_j>0} T_j}\big( f(\hat{S}  \cup \{v\}) - f(\hat{S}) \big)$
    \State Let $j_d$ be the district such that $c_d \in T_{j_d}$
    \If{$|\hat{S}| > \lceil \alpha b \rceil$ \textbf{or} 
    \Statex $f(\hat{S} \cup \{c_d\}) - f(\hat{S}) \ge \beta \cdot (f(\hat{S} \cup \{c\}) - f(\hat{S}))$}
        \State $\hat{S} \gets \hat{S} \cup \{c_d\}$
        \State $d_{j_d} \gets d_{j_d} - 1$
    \Else
        \State $\hat{S} \gets \hat{S} \cup \{c\}$
    \EndIf
\EndWhile
\State \Return $\hat{S}$
\end{algorithmic}
\label{alg:2}
\end{algorithm}

\begin{remark}[A heuristic improvement for $\beta=1.0$]
When $\beta = 1.0$, we use the following heuristic to improve the performance of Algorithm~\ref{alg:1}: before line $6$, we run the Greedy algorithm to allocate $\lceil\alpha \cdot b\rceil$ facilities, and provide this allocation to the LLM as a part of the prompt in line $6$ (the greedy allocation is serving as a minimum district allocation, like we do in line 6 of Algorithm~\ref{alg:3}). We then skip line $7$ and continue with line $8$. This approach allows the LLM to leverage the greedy selection while preserving the same theoretical guarantee.

\end{remark}

\subsection{Theoretical guarantee}

Since the coverage function $f$ is monotone and submodular, the standard greedy algorithm achieves a $(1 - 1/e)$ approximation ratio \cite{nemhauser1978analysis}.  
We extend this result to our hybrid framework, which combines greedy selection with LLM-guided refinement.

Theorem~\ref{thm:1} generalizes the classic result of \cite{hochbaum1998analysis}.  
While that work establishes approximation guarantees for allocations closely following a purely greedy strategy, we extend the analysis to two dimensions:  
(i)~arbitrary monotone submodular functions (beyond coverage-specific objectives), and  
(ii)~partial selection settings where fewer than $k$ cells are chosen.  
This latter generalization is crucial in our setup, as our iterative framework allows the LLM to adjust or supplement a subset of the greedy-selected locations.

\begin{theorem}
Let $b \in \mathbb{N}_{>0}$ and $\alpha, \beta \in [0,1]$ be the balance parameters provided as inputs to Algorithm~\ref{alg:1}.
Let $OPT_b \subseteq V$ denote an optimal subset of size $b$ maximizing $f$.  
Denote by $S_{\text{limit}}$ the allocation returned by Algorithm~\ref{alg:1}.  
Then
\[
f(S_{\text{limit}}) \ge 
\big( 1 - (1 - \tfrac{\beta}{b})^{\alpha b} \big) f(OPT_b)
\ge
\big( 1 - e^{-\alpha \beta} \big) f(OPT_b).
\]
\label{thm:1}
\end{theorem}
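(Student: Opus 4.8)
The plan is to reduce Theorem~\ref{thm:1} to the analysis of a single run of \textsc{GuidedGreedy} started from the empty set, since every iterate $S_i$ of Algorithm~\ref{alg:1} --- and in particular $S_{\text{limit}}$ --- is returned by \textsc{GuidedGreedy}$(\alpha,\beta,b,V,\emptyset,d)$ for some district-budget vector $d$. Write $v_1,\dots,v_b$ for the cells that run inserts in order and put $\hat S_0=\emptyset$, $\hat S_t=\{v_1,\dots,v_t\}$, so that $S_{\text{limit}}=\hat S_b$. The argument is then a refinement of the classical greedy submodular analysis \cite{nemhauser1978analysis} in which only the first $\lceil\alpha b\rceil$ insertions need to be $\beta$-greedy.

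The first and most delicate step is to read off from the control flow of Algorithm~\ref{alg:2} that for every $t\le\lceil\alpha b\rceil$,
\[
f(\hat S_t)-f(\hat S_{t-1})\;\ge\;\beta\cdot\max_{v\in V}\big(f(\hat S_{t-1}\cup\{v\})-f(\hat S_{t-1})\big).
\]
Indeed, at the start of iteration $t$ we have $|\hat S_{t-1}|=t-1<\lceil\alpha b\rceil$, so the first clause of the \textbf{if} in line 8 is false; hence the algorithm either adds $c_d$, which by the surviving clause has marginal gain at least $\beta$ times that of the unconstrained maximizer $c$, or it adds $c$ itself, whose marginal gain equals the maximum and so is $\ge\beta$ times it because $\beta\le1$. (If $\bigcup_{j:d_j>0}T_j$ is empty, so $c_d$ is undefined, then only $c$ can be added, which is at least as good; and if the maximum marginal gain ever vanishes then $f(\hat S_{t-1})\ge f(OPT_b)$ already and the theorem is immediate.)

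Given this, I would invoke the standard submodular inequality: monotonicity and submodularity give, for any $S$, $f(OPT_b)\le f(S\cup OPT_b)\le f(S)+\sum_{o\in OPT_b}\big(f(S\cup\{o\})-f(S)\big)\le f(S)+b\,\max_{v}\big(f(S\cup\{v\})-f(S)\big)$, using $|OPT_b|=b$. Setting $\delta_t=f(OPT_b)-f(\hat S_t)$ and combining with the displayed $\beta$-greedy bound yields $\delta_t\le(1-\beta/b)\,\delta_{t-1}$ for all $t\le\lceil\alpha b\rceil$; unrolling from $\delta_0=f(OPT_b)-f(\emptyset)\le f(OPT_b)$ (indeed $f(\emptyset)=0$ for the coverage function) gives $f(\hat S_{\lceil\alpha b\rceil})\ge\big(1-(1-\beta/b)^{\lceil\alpha b\rceil}\big)f(OPT_b)$. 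Since $\alpha\le1$ forces $\lceil\alpha b\rceil\le b$ and $f$ is monotone with $\hat S_{\lceil\alpha b\rceil}\subseteq\hat S_b=S_{\text{limit}}$, the same lower bound holds for $f(S_{\text{limit}})$. Finally $0\le 1-\beta/b\le1$ and $\lceil\alpha b\rceil\ge\alpha b$ give $(1-\beta/b)^{\lceil\alpha b\rceil}\le(1-\beta/b)^{\alpha b}$, and $1-x\le e^{-x}$ gives $(1-\beta/b)^{\alpha b}\le e^{-\alpha\beta}$, which is precisely the claimed chain of inequalities.

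The only real obstacle is the first step: confirming from the branching of \textsc{GuidedGreedy} that \emph{each} of the first $\lceil\alpha b\rceil$ insertions is $\beta$-greedy no matter which branch fires, together with the minor ceiling-versus-$\alpha b$ bookkeeping needed to pass from the sharper exponent $\lceil\alpha b\rceil$ to the stated $\alpha b$. Everything else --- the counting bound, the geometric recursion, the exponential estimate, and the monotone extension to all $b$ selections --- is routine. The $\beta=1$ heuristic in the remark is covered by the same recursion, since replacing the \textsc{GuidedGreedy} prefix by a genuine greedy prefix only makes those $\lceil\alpha b\rceil$ insertions exactly greedy.
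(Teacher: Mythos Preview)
Your argument is correct and is essentially the same as the paper's: both hinge on showing that the first $\lceil\alpha b\rceil$ insertions of \textsc{GuidedGreedy} are $\beta$-greedy (the paper's Lemma~\ref{lem:alg2connect}) and then running the standard geometric gap recursion $\delta_t\le(1-\beta/b)\delta_{t-1}$ (the paper's Lemma~\ref{lem:bound}). The only structural difference is that the paper derives Theorem~\ref{thm:1} as the $t=1$ case of the online Theorem~\ref{thm:2}, and for that purpose formulates Lemma~\ref{lem:bound} for an arbitrary $\beta$-greedy \emph{subsequence} rather than a prefix --- a generality you do not need here, since in the single-round case the $\beta$-greedy elements are exactly the first $\lceil\alpha b\rceil$.
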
 

This result guarantees that even when part of the selection process is delegated to the LLM—subject to the $\alpha$–$\beta$ trade-off parameters—the final allocation still retains an exponential coverage bound comparable to the classical greedy algorithm.
The parameters $\alpha$ and $\beta$ are provided to Ethiopian stakeholders as tools to determine the optimal balance between offering guarantees and adhering to expert advice. 
We anticipate that different regions in Ethiopia will select varying guarantee parameters, depending on their specific need to balance the influence of current human planners with coverage considerations.
The parameter $\beta$ is particularly useful when the coverage values of different cells are similar. 
In such cases, it prevents the greedy algorithm from selecting a cell solely because it offers a marginally better coverage gain despite much worse alignment with the advice. 
The parameter $\alpha$ allows the LLMs some flexibility to follow the advice even when it recommends cells that provide very low coverage improvement.

\paragraph{An Example}
\begin{figure}
    \centering
    \includegraphics[width=0.5\textwidth]{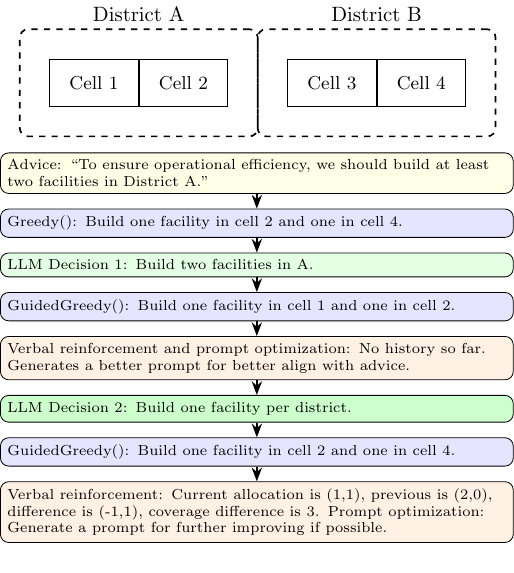} 
    \caption{Toy running example with two districts and four cells; 
    For GuidedGreedy, we use the parameters $\alpha=0,\beta=1$ allowing the LLM to fully determine the district allocation. Greedy() denotes the greedy algorithm of ~\cite{nemhauser1978analysis}.
    }
    \Description{Flowchart showing a two-district grid with a detailed running example. }
    \label{fig:example}
\end{figure}

In Figure~\ref{fig:example}, we present a toy running example of our approach. The example considers a region consisting of two districts, each containing two grid-cells. Each grid-cell represents a 1 km × 1 km area on a map of Ethiopia, and our approach focuses on selecting districts and cells for building health facilities.
Given an advice sentence, we first apply the algorithm of~\cite{nemhauser1978analysis} to obtain a grid-cell allocation. 
We then invoke the LLM to produce a district allocation, incorporating this grid-cell allocation among other factors.
Based on this district allocation, we run the GuidedGreedy algorithm to compute a refined grid-cell allocation. 
Verbal reinforcement and prompt optimization are used to improve the prompt, and the updated prompt is supplied to the LLM in the subsequent iteration. This procedure repeats until a final grid-cell allocation is obtained.
Note that typical regions in Ethiopia generally have many more districts.

\subsection{Supporting online budget}
\label{sec:online}
An important extension of our framework considers the setting where budgets arrive sequentially over multiple rounds \cite{choo2026optimizing}, for example, one allocation round per fiscal year.  
We assume that all candidate grid-cells remain available throughout the planning horizon and that new budget increments become available at the beginning of each round.
To adapt Algorithm~\ref{alg:1} to this setting, we modify its update rule to incorporate the historical allocations from previous rounds.  
Specifically, in line~6 of Algorithm~\ref{alg:3}, the prompt to the LLM now includes a \emph{minimum district allocation vector} representing facilities already selected in earlier years.  
This ensures that the LLM respects all previously established allocations and builds upon them when optimizing the new round.

Let $b_r$ denote the available budget in round $r$, and suppose the process has reached round $t$.  
The cumulative number of facilities planned up to this point is therefore $b = \sum_{r=1}^{t} b_r$.
The LLM prompt for round $t$ requests an updated allocation consistent with this cumulative budget and the existing district allocation.
The multi-round variant of our approach is summarized in Algorithm~\ref{alg:3}.

\begin{algorithm}
\caption{Multiple Rounds Facility Location}
\label{alg:3}
\begin{algorithmic}[1]
\State \textbf{Input:} Budgets $b_1,...,b_t$, advice set $\mathbf{A}$, balance parameters $\alpha,\beta \in [0,1]$, available grid cells $V$, existing facilities $S_e$, prompts $Prompt_0$, $Prompt_{reflection}$, time horizon $t$.
\State \textbf{Output:} Allocation $S$
\State $S\gets \emptyset$
\For{$r = 1$ to $t$}
\State (Init) $S_0\gets Greedy(b,V)$
\State (Init) Update $Prompt_0$ with $S_0$, $d(S_0)$, minimum district allocation $h(S)$, and the other inputs.
\For{$i = 1$ to limit}
\State (LLM) $d\gets$ LLM(Prompt$_i$).
\State (Solve)  $S_i \gets GuidedGreedy(\alpha,\beta, b_r, V,S, d)$.
\State (Compare) Compute differences $\Delta f$, $\Delta h$ (Eq.~\ref{eq:delta}) and update $Prompt_{reflection}$.
\State (Verbal Feedback) Query LLM \\\;\;\;\;\;\;\;\;\;\;\;\;\;\;\;\;to generate verbal reflection.
\State (Optimize Prompt) Update Prompt$_i$, \\\;\;\;\;\;\;\;\;\;\;\;\;\;\;\;\;based on Eq. \eqref{eq:prompt1}, \eqref{eq:prompt2}.
\EndFor
\State $S\gets S_{limit}$
\EndFor
\State \textbf{Return:} $S$
\end{algorithmic}
\end{algorithm}

The following theorem expands the results of Theorem~\ref{thm:1} to the online settings. 

\begin{theorem} Let $b_1, \dots, b_t$ denote the budgets, and let $\alpha, \beta \in [0,1]$ be the balance parameters provided as inputs to Algorithm~\ref{alg:3}. Let $b=\sum_{i=1}^t b_i$ and let $OPT_b \subseteq V$ denote an optimal subset of size $b$ maximizing $f$.  Denote by $S$ the allocation returned by Algorithm~\ref{alg:3}.  Then \[ f(S) \ge \big( 1 - (1 - \tfrac{\beta}{b})^{\alpha b} \big) f(OPT_b) \ge \big( 1 - e^{-\alpha \beta} \big) f(OPT_b). \] \label{thm:2} \end{theorem}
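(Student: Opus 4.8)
The plan is to reduce Theorem~\ref{thm:2} to Theorem~\ref{thm:1} by observing that Algorithm~\ref{alg:3} is, in effect, a single run of Algorithm~\ref{alg:1} on the cumulative budget $b=\sum_{r=1}^t b_r$, just with the greedy/LLM selections partitioned across rounds. First I would fix notation: let $S^{(r)}$ be the allocation held at the end of round $r$ (so $S^{(0)}=S_e$ or $\emptyset$ and $S=S^{(t)}$), and within round $r$ let the final iterate be produced by $\textsc{GuidedGreedy}(\alpha,\beta,b_r,V,S^{(r-1)},d)$. The key structural point is that $\textsc{GuidedGreedy}$ starts from $\hat S=S^{(r-1)}$ and appends $b_r$ cells greedily, where on at least $\lceil\alpha b_r\rceil$ of those steps the chosen cell has marginal gain at least $\beta$ times the unrestricted greedy maximum. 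Concatenating over all $t$ rounds, $S$ is built by a sequence of $b$ incremental additions in which at least $\sum_{r=1}^t \lceil \alpha b_r\rceil \ge \lceil \alpha b\rceil$ of the steps are ``$\beta$-greedy'' in the above sense.

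Next I would invoke exactly the submodular accounting used in the proof of Theorem~\ref{thm:1}. The standard argument only needs that, at each step where a $\beta$-greedy choice is made from current set $\hat S$, the marginal gain satisfies
\[
f(\hat S\cup\{c_d\})-f(\hat S)\;\ge\;\frac{\beta}{b}\bigl(f(OPT_b)-f(\hat S)\bigr),
\]
which follows from submodularity and $|OPT_b|=b$ (the unrestricted greedy max dominates the average marginal gain of the $b$ elements of $OPT_b\setminus\hat S$), and that on the remaining steps $f$ is nondecreasing (monotonicity), so those steps never hurt. Letting $\delta_k = f(OPT_b)-f(S_k)$ after $k$ total additions, the $\beta$-greedy steps give $\delta_{k+1}\le(1-\beta/b)\delta_k$ and the other steps give $\delta_{k+1}\le\delta_k$; after $b$ additions with at least $\alpha b$ contractions we get $\delta_b\le(1-\beta/b)^{\alpha b}\delta_0\le(1-\beta/b)^{\alpha b} f(OPT_b)$, hence $f(S)\ge\bigl(1-(1-\beta/b)^{\alpha b}\bigr)f(OPT_b)$, and the second inequality is $(1-x)^{1/x}\le e^{-1}$ with $x=\beta/b$.

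The main obstacle is justifying that the count of $\beta$-greedy steps genuinely aggregates across rounds, i.e.\ that the ``$|\hat S|>\lceil\alpha b_r\rceil$ or $\beta$-gain'' test in line~7 of Algorithm~\ref{alg:2}, applied per round to $|\hat S|$ measured from the round's starting point, still delivers at least $\lceil\alpha b\rceil$ genuinely $\beta$-greedy additions overall, \emph{and} that these bounds are measured against the global $OPT_b$ rather than any per-round optimum. The first part is essentially a counting check ($\sum_r\lceil\alpha b_r\rceil\ge\alpha\sum_r b_r$, with ceilings only helping); the second part is where one must be a little careful, since in round $r$ the comparison cell $c$ is the unrestricted greedy pick over all of $V$ given the current $\hat S$, so the inequality $f(\hat S\cup\{c\})-f(\hat S)\ge \frac1b\bigl(f(OPT_b)-f(\hat S)\bigr)$ holds with the \emph{same} $b$ throughout by monotone submodularity regardless of which round we are in. Once this is in place the recursion is identical to Theorem~\ref{thm:1}, so the remaining work is purely bookkeeping; I would state it as a short lemma that ``running $\textsc{GuidedGreedy}$ in $t$ chained phases with parameters $(\alpha,\beta,b_r)$ is equivalent, for the purpose of the coverage bound, to one phase with $(\alpha,\beta,\sum_r b_r)$,'' and then cite Theorem~\ref{thm:1}.
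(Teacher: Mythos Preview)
Your proposal is correct and follows essentially the same approach as the paper: the paper likewise extracts, from each round's call to \textsc{GuidedGreedy}, a subsequence of $\lceil\alpha b_r\rceil$ ``$\beta$-greedy'' additions (their Lemma~\ref{lem:alg2connect}), concatenates these across rounds to obtain at least $\alpha b$ such steps in total, and then runs the identical gap recursion $\Delta_{i+1}\le(1-\beta/b)\Delta_i$ against the global $OPT_b$ (their Lemma~\ref{lem:bound}). The only cosmetic difference is that the paper proves Theorem~\ref{thm:2} directly and derives Theorem~\ref{thm:1} as the $t=1$ special case, whereas you frame the argument as a reduction to Theorem~\ref{thm:1}; the underlying submodular accounting and the counting check $\sum_r\lceil\alpha b_r\rceil\ge\alpha b$ are the same.
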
     

\begin{remark}
This online-budget formulation allows the model to handle dynamically arriving resources while maintaining continuity across years.  
In practice, this property is essential for national-scale public health planning, where upgrades are deployed incrementally based on evolving financial and logistical constraints.  
The framework thus generalizes naturally from a single-shot allocation problem to an online, multi-horizon setting, preserving the same theoretical guarantees on coverage at each stage.
\end{remark}

\section{Experiments}

\subsection{Dataset}

We evaluate the proposed framework using Ethiopia’s projected 2026 population data \cite{worldpop_eth_2015_2030}. Walking accessibility is computed following \cite{choo2026optimizing} with the global friction map \cite{weiss2020global}, assuming a maximum two-hour walking distance. The experiments cover three regions—Afar, Somali, and Benishangul Gumuz—characterized by diverse terrain and population densities.
To emulate multi-stakeholder guidance, we queried Gemini-2.5-Pro to generate 20 expert advice sentences grouped into four sets (five per expert) with intentional contradictions to mimic real decision environments. For quantitative evaluation, an alignment function was automatically synthesized by Gemini-2.5-Pro to score allocations between 0 and 1 according to the textual advice. All iterative allocation and feedback steps were executed using Gemini-2.5-Flash as shown in Figure~\ref{fig:scheme}.
The used prompts are available in Section~\ref{app:prmpts} of the Appendix.

\subsection{Experiment 1: Is the verbal feedback useful?}

In Figures~\ref{fig:vq_afar},\ref{fig:vq_gumuz},\ref{fig:vq_somali} we present the results for two variants of our approach: i) \textbf{Quantitative feedback}, where alignment signals are numeric; and ii) \textbf{Verbal feedback}, where updates are guided by natural-language reflections from the LLM. Across all three regions, the verbal-feedback variant achieved notably higher advice alignment, confirming that language-based reasoning effectively internalizes qualitative goals. The quantitative variant, in contrast, yielded slightly higher coverage values, reflecting its direct optimization focus. Importantly, both variants consistently satisfied the theoretical 
$1-e^{-\alpha\beta}$ coverage guarantee (even if we assume that the optimal selection, $OPT_b$, covers the entire population). Among the three regions, Benishangul Gumuz exhibited the largest gap between the two modes, suggesting stronger contextual heterogeneity where language input provides added value.
\begin{remark}
Note that in all regions, the coverage decreases as the alignment with advice increases. We justify this decrease by the additional knowledge that the human expert provides. 
For example, we may prioritize improving health outcomes in districts with poor existing health status, even if it results in a lower overall coverage.
\end{remark}

\begin{figure}
        \centering
\includegraphics[width=0.5\textwidth]{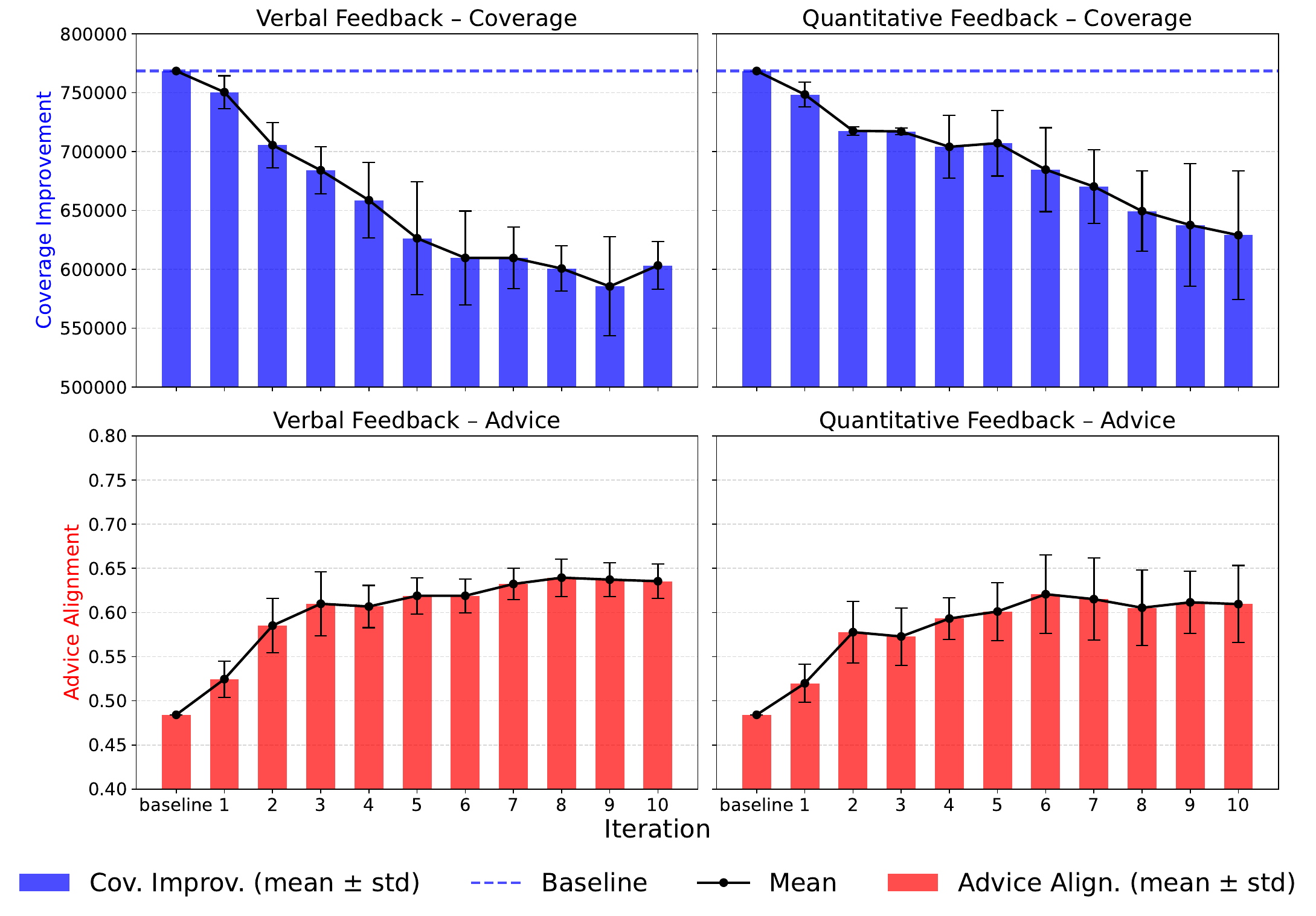}
        \caption{Verbal vs. quantitative feedback for Afar; $\alpha=0.0$, $\beta=1.0$. }
        \label{fig:vq_afar}     
        \Description{ Bar chart comparing verbal and quantitative feedback for Afar. }
\end{figure}

\begin{figure}
    \centering
    \includegraphics[width=0.5\textwidth]{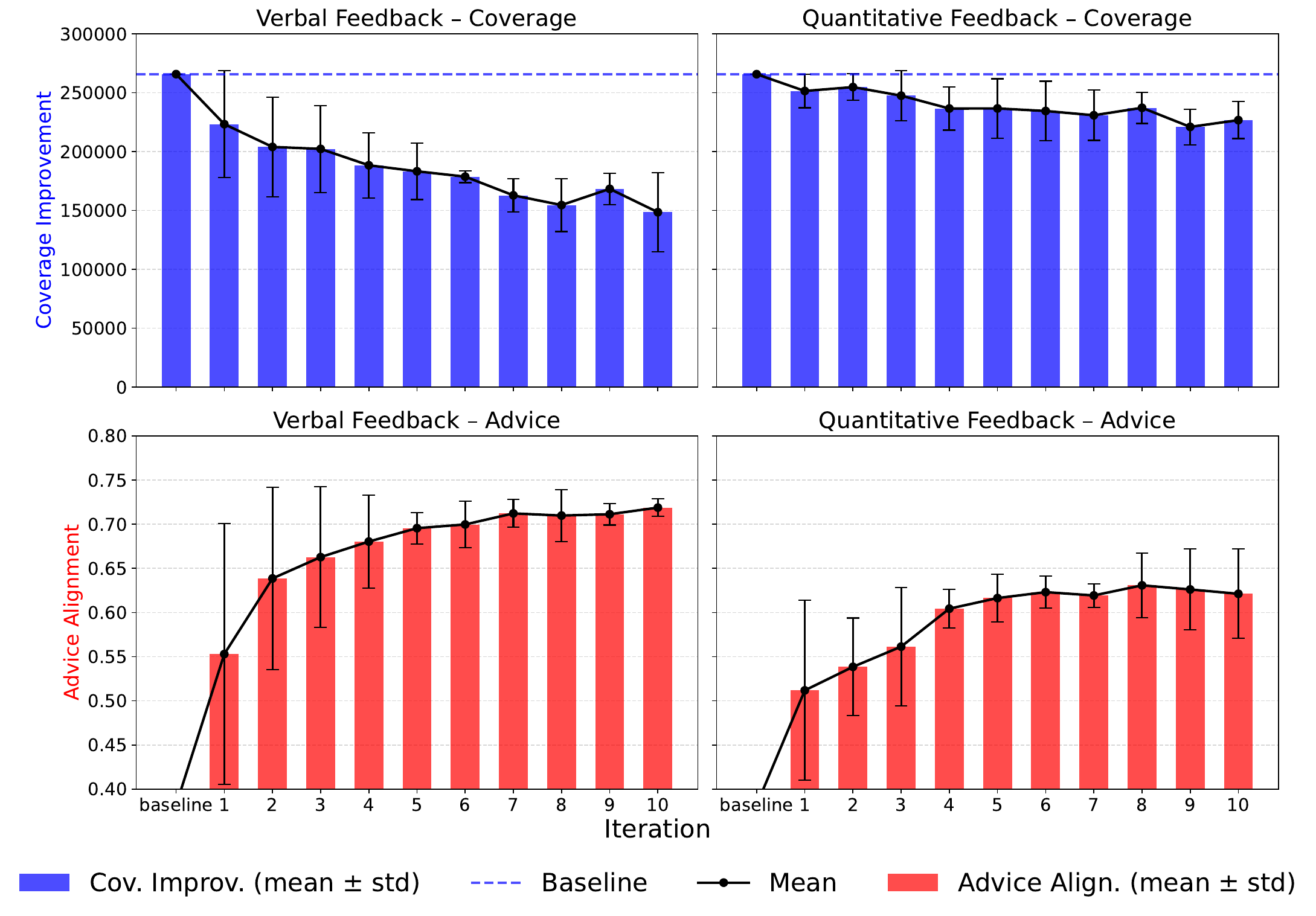}
    \caption{Verbal vs. quantitative feedback for Benishangul Gumuz; $\alpha=0.0$, $\beta=1.0$.}
    \Description{ Bar chart comparing verbal and quantitative feedback for Benishangul Gumuz. }
    \label{fig:vq_gumuz}
\end{figure}

\begin{figure}
    \centering    \includegraphics[width=0.5\textwidth]{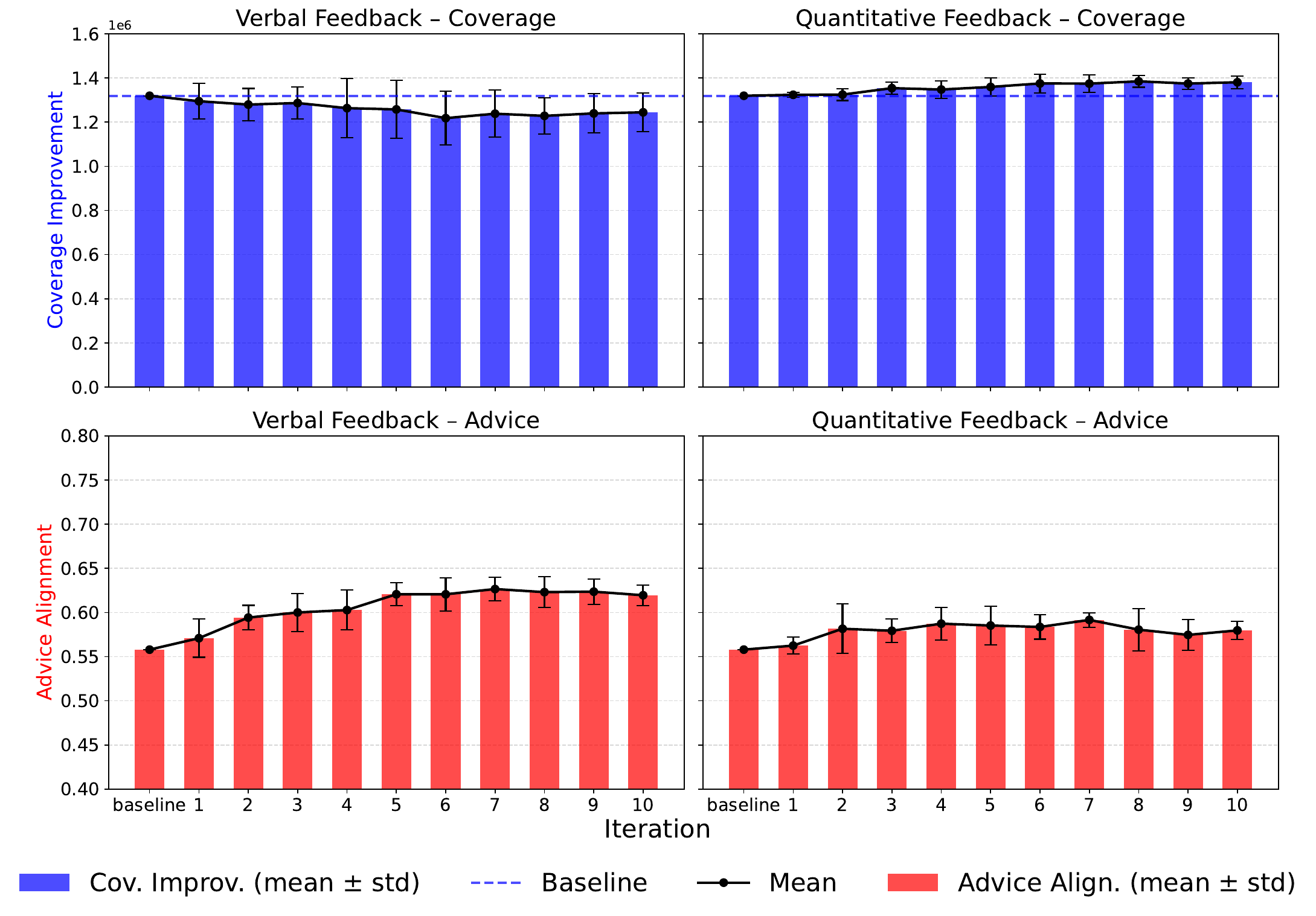}
    \caption{Verbal vs. quantitative feedback for Somali; $\alpha=0.0$, $\beta=1.0$.}
    \label{fig:vq_somali}
    \Description{ Bar chart comparing verbal and quantitative feedback for Somali. }
\end{figure}

\subsection{Experiment 2: Change in actual coverage and alignment as the parameter \texorpdfstring{$\alpha$}~ increases}
\begin{figure}[htbp]
    \centering
\includegraphics[width=0.45\textwidth]{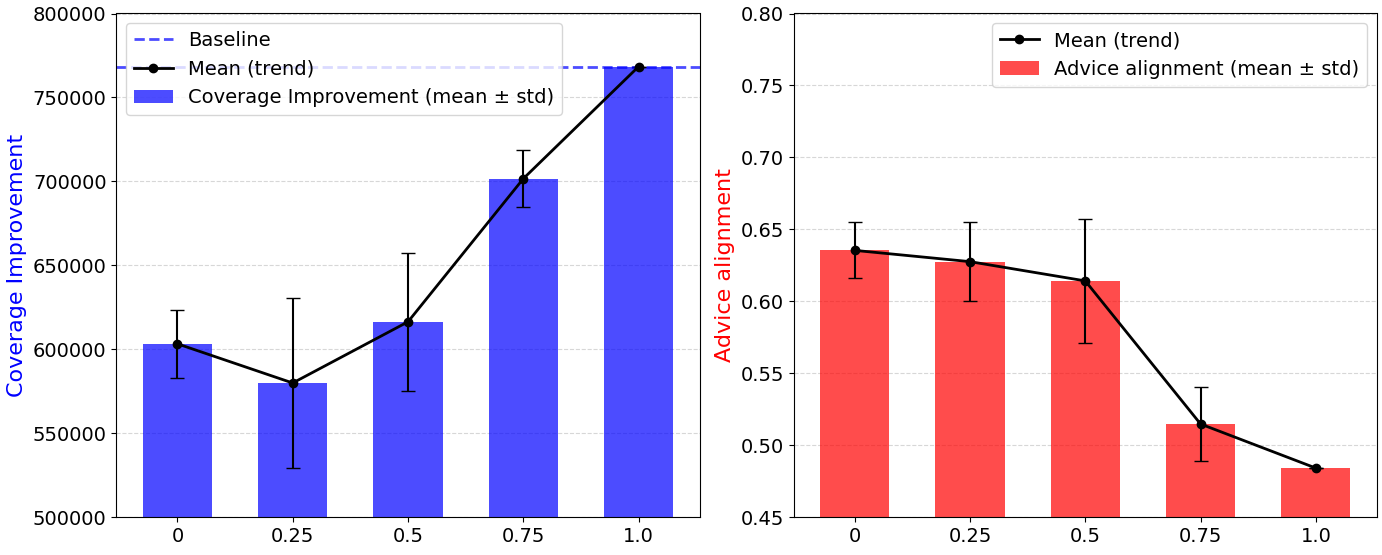} 
\includegraphics[width=0.45\textwidth]{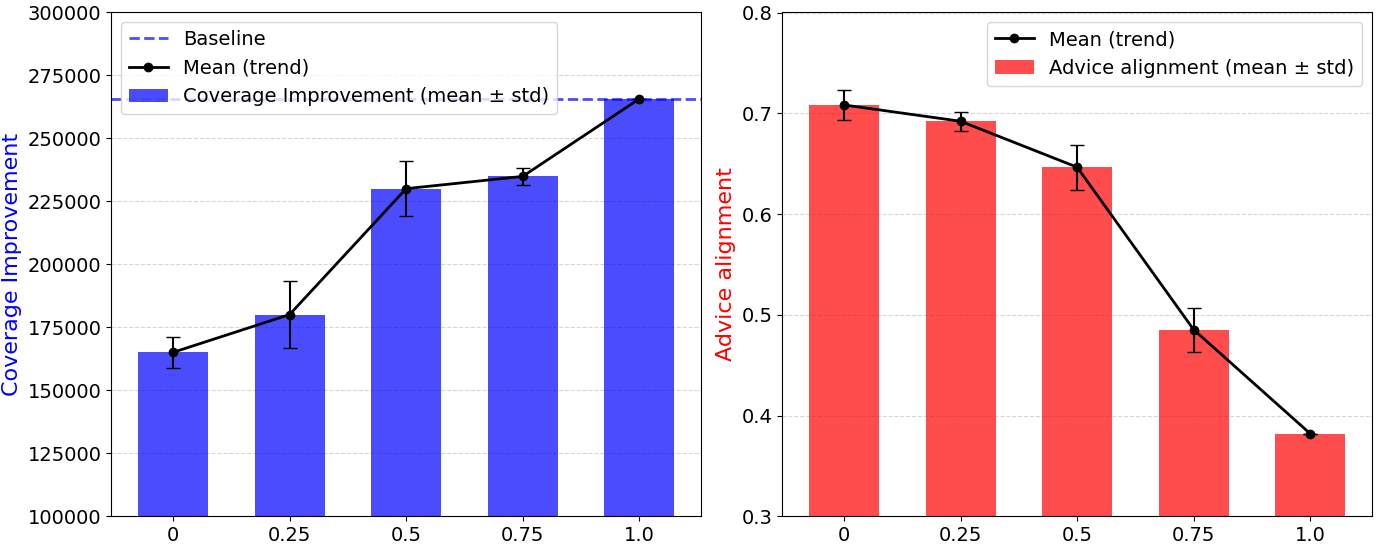}
\includegraphics[width=0.45\textwidth]{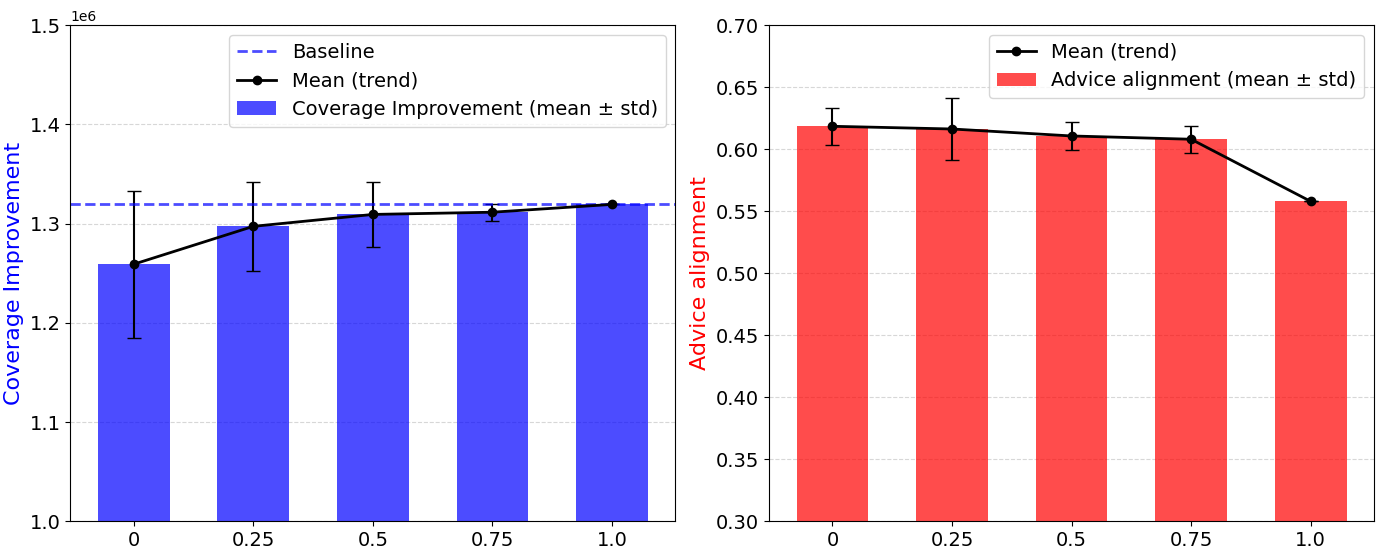}
\caption{Coverage and alignment for different values of $\alpha$ in Afar (top), Benishangul Gumuz (middle), and Somali (bottom) regions; $\beta=1.0$.}
\Description{ Bar charts for different values of $\alpha$ }
    \label{fig:alpha}
\end{figure}

In Figure~\ref{fig:alpha}, we measured changes in coverage and alignment to advice as a function of the parameter $\alpha$, while keeping $\beta=1$. 
As expected, larger values of $\alpha$ lead to higher coverage improvement but lower advice alignment. 
One exception is that in the Afar region, the coverage in $\alpha=0.0$ is worse than in $\alpha=0.25$, indicating that, in practice, setting $\alpha=0.25$ offers little benefit for coverage, since the LLM iterative approach achieves similar results without enforcing greedy selections. 
%In Somali region on the other hand, the alignment with advice is affected severely when even very few locations are selected by the greedy algorithm.
One thing to note is that as the greedy algorithm is deterministic, there are no variance bars for $\alpha=1$. These results validate the interpretability of 
$\alpha$: small 
$\alpha$ values promote qualitative conformity, while larger 
$\alpha$ values recover the classical submodular-optimization behavior.

\subsection{Experiment 3: Is it useful to have longer time window?}
\label{sec:wind}
In Figure~\ref{fig:window} in the Appendix, we examine whether incorporating a longer feedback history improves the performance of verbal refinement. Specifically, we compare two variants: one that includes information from the past three iterations (\emph{3-step window}) and another that considers only the immediately preceding iteration (\emph{1-step window}).
In the \emph{no-window} variant, the feedback prompt contained the current and previous allocations but excluded their differences in allocation and total coverage. Overall, after ten iterations, the observed differences between the two best variants (1-step and 3-step windows) were minimal in both population coverage and advice alignment. This suggests that the LLM effectively internalizes historical trends even with limited contextual information, making explicit multi-step computation largely redundant.

Furthermore, extending the feedback window to three iterations did not yield measurable gains in either metric. Consequently, we adopt the 1-step window setting as our default configuration, as it achieves comparable performance while being slightly more stable and computationally efficient.

\subsection{Analyzing visually the different allocation algorithms}
Figure~\ref{fig:visual} presents the resulting facility allocations in a subregion of the Afar region, generated under two parameter settings: $\alpha = 0, \beta = 1$ and $\alpha = 0.5, \beta = 1$.
This visualization highlights how different trade-off configurations lead to distinct spatial allocation patterns.
While the two solutions exhibit substantial overlap in high-priority areas—indicating agreement on core population centers—the overlap is not complete.
The $\alpha = 0.5$ configuration, which emphasizes coverage optimization, tends to expand toward denser clusters, whereas the $\alpha = 0$ configuration better reflects qualitative expert preferences.
Together, these contrasting patterns illustrate the interpretability and tunability of the proposed framework in balancing quantitative coverage with qualitative alignment.

\begin{figure}
    \centering
    \includegraphics[width=0.3\linewidth,angle=90]{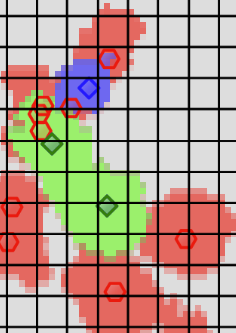}
        \caption{Red hexagons indicate existing facilities. Green rhombuses show locations selected by the algorithm with $\alpha = 0.5$, while blue rhombuses correspond to $\alpha = 0$ ($\beta$ is always $1.0$). Colored areas represent the accessibility areas of the facilities. Each grid cell covers 5 km × 5 km.}
    \label{fig:visual}    
    \Description{ Gridded map showing facility allocations in a subregion of the Afar region.}
\end{figure}

\subsection{Summary of findings}
Overall, the experimental results demonstrate that incorporating language-based feedback substantially enhances interpretability without compromising the theoretical coverage guarantees of the underlying optimization framework. The trade-off parameter $\alpha$ provides a clear and intuitive mechanism for balancing quantitative coverage performance with qualitative expert alignment, enabling flexible policy tuning across regions. Moreover, the analysis shows that short feedback windows are sufficient for stable convergence, indicating that the LLM can effectively capture temporal patterns without extensive historical context. Finally, the visual comparisons reveal that different parameter settings produce distinct yet interpretable allocation patterns, offering planners and policymakers valuable insights into the trade-offs between efficiency and fairness in real-world deployment.
\section{Conclusions}

In this paper, we present a novel framework for optimizing bi-objective problem in a complex environment. 
Our LEG framework employs an adaptive greedy algorithm integrated within an LLM-based refinement loop, ensuring coverage guarantees while jointly optimizing advice alignment and coverage.
We evaluated the framework on three real-world datasets, demonstrating its effectiveness. 
Combined with existing literature, our work brings us closer to addressing pressing real-world challenges.

%%%%%%%%%%%%%%%%%%%%%%%%%%%%%%%%%%%%%%%%%%%%%%%%%%%%%%%%%%%%%%%%%%%%%%%%

%%% The next two lines define, first, the bibliography style to be 
%%% applied, and, second, the bibliography file to be used.
\clearpage
\section*{Acknowledgements}

We are very grateful to Kasahun Sime and Wondesen Nigatu of the  Ministry of Health, Ethiopia, for their continuous support and valuable advice.
The findings and conclusions in this report are those of the authors and do not represent any recommendations
or official positions.

\bibliographystyle{ACM-Reference-Format} 
\bibliography{refs}
\appendix
\onecolumn

\section{Prompts}
\label{app:prmpts}
Here we provide the prompts used in our experiments. For generating the advice we used the prompt:

\begin{tcolorbox}[colback=gray!5,colframe=black!40]
\textbf{Task:} Generate 20 sentences of (possibly contradicting) advice on where to place the facilities for the given budget.\\
\textbf{Input:} Types={}, a budget=20 health facilities.\\
\textbf{Specifications:} The advice is given by multiple experts/advisors and there should be contradictions and different points of view. Each sequence of 5 sentences belong to a different expert.\\
\textbf{Goal:} The advice will be used for deciding how many facilities to build in each district.\\
\textbf{Output:} A python pair that contains:\\
1. The 20 advice sentences, numbered from 1 to 20, with no additional information.\\
2. Two contradicting sentences and an explanation why they contradict.
\end{tcolorbox}

The advice generation is done by the strong model: Gemini-2.5-pro.
For evaluating the alignment with advice we use the following prompt with the same strong model:

\begin{tcolorbox}[colback=gray!5,colframe=black!40]
\textbf{Task:} Generate an alignment function.\\
\textbf{Inputs:} Advice=, sample allocation vector=. \\     
\textbf{Output:} A function: texttt{advice\_verifier(allocation\_vector)} 
that gets an allocation vector of the same format as an input and return an allocation value between 0 and 1.\\
\textbf{Goal:} The function should assess alignment with each advice sentence individually and then aggregate these assessments into a single overall score.\\
The function should consider partial alignment and not only full alignment with the advice sentences.\\
\textbf{Output:} a runnable python code of the function."
\textbf{Constraints:} The output should include only the function itself with no additional text. The function should be parsable by python exec function.
\end{tcolorbox}

The original allocation is selected as the results of an unconstrained greedy algorithm.
For adjusting the advice we use the following prompt with a weaker model (Gemini-2.5-flash):

\begin{tcolorbox}[colback=gray!5,colframe=black!40]
\textbf{\underline{Prompt for generating the high-level allocation:}}\\
\textbf{Task:} Generate a district allocation.\\
\textbf{Inputs:} types={}, district names={}, budget={}, advice={}, population per district={}, initial coverage per district={}, current allocation={}, previous allocation={}, minimum allocation={}.\\
\textbf{Goal:} align the allocation with advice sentences while improving population coverage.\\
\textbf{Constraints:} 1. Minimum allocation: no district may receive fewer units than specified in the minimum district allocation for this district (this constraint is critical). \\
2. Format: the new allocation must match the format of the current district allocation.\\
3. Budget: total units must equal {}. Move either one or two units between districts to improve alignment and coverage, without violating constraint 1.\\
4. Technical: all values must be integers, all districts included as keys (even if 0), and output must be valid Python dictionary syntax.\\
\textbf{Instruction:} Output only the updated district allocation as a Python dictionary. No explanations, comments, or extra text.\\
\textbf{Additional instructions:} FEEDBACK.
\end{tcolorbox}

We then construct a feedback addition to the prompt.  
The feedback prompt is generated by:

\begin{tcolorbox}[colback=gray!5,colframe=black!40]
\textbf{Task:} Describe the differences between the current and previous run and give instructions to improve the current allocation by better aligning with the  advice sentences and by improving coverage.\\
\textbf{Instructions:} 
1. Weigh the alignment of the current allocation and the previous allocation with each of the advice sentences and provide instructions to modify the current allocation so that the sum of contributions in the new suggested allocation is better than in both the current and previous allocations.\\
2. It is allowed to move either one or two units between districts to improve alignment and coverage, without violating constraint 1. \\
3. If impossible to find better allocation, suggest to retain the current allocation.\\                                        
\textbf{Input:} Advice={}, types={}, current allocation={}, previous allocation={}, allocation difference={}, coverage difference={}.\\ 
\textbf{Output format:} Start directly with the description and instructions. Be concise. 
\end{tcolorbox}

If there is no previous allocation, the text below is used as the feedback:
\begin{tcolorbox}[colback=gray!5,colframe=black!40]
Improve the previous solution with respect to following the advice and with respect to total coverage.
\end{tcolorbox}

\subsection{An example for prompt optimization}
An example for the generated feedback (Shortened and anonimized) is:
\begin{tcolorbox}[colback=gray!5,colframe=black!40]
The current allocation serves 14 unique districts, an improvement over the previous allocation's 12 districts, indicating better coverage.
**Comparison of Current vs. Previous Allocation with Advice Sentences:**...

**Comparison of Current vs. Previous Allocation with Advice Sentences:**\\
**Summary:** The current allocation is generally better than the previous, especially in terms of coverage (14 vs 12 districts) and alignment with several specific advice points (e.g., \_ minimum, \_ presence, \_ total and specific allocation, \_ coverage). However, it still falls short on key advice sentences.

**Instructions for Improvement:**

To improve the current allocation by better aligning with advice sentences and improving coverage, while moving one or two units without violating existing constraints (especially \_ /Zone \_ >= 7), here is a suggested modification:

1.  **Move 1 unit from \_ to \_.**
    *   This addresses **Advice 6 (Every zone >=3, esp. \_)** by bringing \_'s total facilities from 2 to 3.

2.  **Move 1 unit from \_ to \_.**

**New Suggested Allocation:** (district names have been omitted).\\
*   `\_`: 1 -> **0** \\
*   `\_`: 0 -> **1** \\ 
*   `\_`: 3 -> **2** \\
*   `\_`: 0 -> **1** 

(All other districts retain their current allocation)\\
...\\
This suggested allocation demonstrates a better sum of contributions by directly addressing critical advice gaps (\_'s minimum, \_'s internal focus) and improving coverage, while adhering to the constraint of moving only two units.
sum of allocated units 20.
\end{tcolorbox}

This text is appended to the prompt mentioned above as the Additional instructions section.

\subsection{Qualitative feedback:}
We tested a scenario where the feedback is not processed as a separate query. In this scenario, the feedback is as below:

\begin{tcolorbox}[colback=gray!5,colframe=black!40]
 1. Weigh the alignment of the current allocation and the previous allocation with each of the advice sentences and modify the current allocation so that the sum of contributions in the new suggested allocation is better than in both the current and previous allocations.\\
2. It is allowed to move either one or two units between districts to improve alignment and coverage, without violating constraint 1. \\
3. If impossible to find better allocation, suggest to retain the current allocation.\\
Allocation difference={}, coverage difference={}. "
\end{tcolorbox}

\section{Proofs of Theorems \ref{thm:1} and \ref{thm:2}}
Note that Theorem \ref{thm:1} is a special case of Theorem \ref{thm:2} where the time horizon 
t equals 1. 
Therefore, it suffices to prove Theorem \ref{thm:2}, from which the proof of Theorem \ref{thm:1} follows directly.
While similar proofs can be found in the literature (e.g., in \cite{nemhauser1978analysis} and \cite{hochbaum1998analysis}), we present the complete proof here to ensure our results are self-contained.

We begin with two necessary definitions:

\begin{definition}
Given a sequence of grid-cells $S$ and a subsequence $(s_{\beta_1}, \ldots, s_{\beta_K})$, we denote by $S_{\leq \beta_i}$ the prefix of $S$ up to and including $s_{\beta_i}$.
\end{definition}

\begin{definition}
Given a sequence of grid-cells  $S$ and a subsequence $(s_{\beta_1}, \ldots, s_{\beta_K})$, we denote by $S_{< \beta_i}$ the prefix of $S$ whose last element is the element immediately preceding $s_{\beta_i}$ in $S$.
\end{definition}

To prove Theorem~\ref{thm:2}, we proceed in three steps. First, we establish a key property of the grid-cell allocation returned by Algorithm~\ref{alg:2}. Second, we prove that any allocation with this property satisfies a certain approximation bound relative to the optimal solution. 
Finally, we show that Algorithm~\ref{alg:3} produces a grid-cell allocation meeting the requirements of Theorem~\ref{thm:2}.

\begin{lemma}
\label{lem:alg2connect}
Let $\hat{S}$ denote the grid-cell allocation returned by Algorithm~\ref{alg:2} 
when initialized with the input grid-cell allocation $S$. We abuse notation and 
refer to $\hat{S}$ as a sequence, following the insertion order.\footnote{Each invocation of Algorithm~\ref{alg:2} adds 
grid-cells in a specific order. When invoked multiple times, we order $\hat{S}$ 
by concatenating: first all grid-cells from the first invocation (in their 
insertion order), then all from the second invocation, and so on.} 

Define $K := \alpha b$, where $\alpha$ and $b$ are algorithm parameters. There exists a subsequence $S_{\beta} = (s_{\beta_1}, s_{\beta_2}, \ldots, s_{\beta_K})$ of $\hat{S} \setminus S$ of length $K$ such that for every $i \in [K]$, the marginal gain satisfies:
\begin{equation}
\label{eqn:gooditems}
    f(s_{\beta_i} \mid S_{<\beta_i}) \ge \beta \cdot \max_{y \in V \setminus S_{<\beta_i}} f(y \mid S_{<\beta_i}),
\end{equation}
where $S_{<\beta_i}$ denotes the sequence $\hat{S}$ immediately before $s_{\beta_i}$ is added.
\end{lemma}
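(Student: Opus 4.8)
The plan is to trace through the structure of \textsc{GuidedGreedy} (Algorithm~\ref{alg:2}) and identify which of the grid-cells added to $\hat{S}$ can serve as members of the subsequence $S_\beta$. Observe that at each iteration of the while-loop, exactly one grid-cell is added to $\hat{S}$, and it is added via one of two branches: either through line~9 (picking $c_d$, the best cell respecting the district budget $d$) or through line~12 (picking $c$, the globally best cell). The \textbf{key observation} is that \emph{whenever a cell is added via line~9, it automatically satisfies the inequality \eqref{eqn:gooditems}}. Indeed, the \texttt{if}-condition in lines~7--8 triggers line~9 either because $|\hat{S}| > \lceil \alpha b\rceil$, or because $f(\hat{S}\cup\{c_d\}) - f(\hat{S}) \ge \beta\cdot(f(\hat{S}\cup\{c\}) - f(\hat{S}))$; and since $c = \arg\max_{v\in V} (f(\hat{S}\cup\{v\}) - f(\hat{S}))$, the latter condition is exactly $f(c_d \mid \hat{S}) \ge \beta \cdot \max_{y\in V\setminus\hat{S}} f(y\mid \hat{S})$. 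Meanwhile, whenever a cell is added via line~12, it is the globally maximizing cell $c$, so its marginal gain equals $\max_{y} f(y\mid\hat{S}) \ge \beta\cdot\max_{y} f(y\mid\hat{S})$ (using $\beta\in[0,1]$), hence it \emph{also} satisfies \eqref{eqn:gooditems}.

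So the first step is to formalize the dichotomy above: every single cell added in lines 4--12, regardless of branch, has marginal gain at least $\beta$ times the unrestricted maximum marginal gain at the moment it is inserted. This seems almost too strong — if every cell works, why only claim a subsequence of length $K = \alpha b$? The subtlety, and the reason I would not simply claim all $b - |S|$ new cells qualify, is the \emph{monotonicity of the maximum marginal gain is not assumed}, so the ``max'' on the right-hand side of \eqref{eqn:gooditems} is evaluated at the prefix $S_{<\beta_i}$, which differs for each candidate — but that is fine since the branch conditions are also evaluated at that same prefix. Wait: the real reason to restrict to length $K$ is cleanliness of the downstream argument (the second step of the theorem proof bounds the approximation using exactly $K$ ``good'' steps), and also that in the multi-round setting of Algorithm~\ref{alg:3} the budget $b$ is the cumulative budget while each round adds only $b_r$ cells. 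The safe and sufficient claim is therefore: \emph{there exist at least $K$ cells among $\hat{S}\setminus S$ satisfying \eqref{eqn:gooditems}}, and we simply pick any $K$ of them (say the first $K$ in insertion order) to form the subsequence. Second step: verify $|\hat{S}\setminus S| \ge K$, i.e. that the algorithm adds at least $\alpha b$ new cells. This follows because $\hat{S}$ is grown until $|\hat{S}| = b$ and, in the single-round case, $S$ is the relevant starting point with $|S| = 0$ (line 3 of Alg.~1 passes $\emptyset$); in the multi-round case the starting point has size $b - b_r$, so $|\hat{S}\setminus S| = b_r$, and here one must be slightly careful — but the lemma as stated concerns a single invocation, and the claim $K = \alpha b$ with $b$ the per-invocation budget is what is needed. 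I would state the lemma cleanly for one invocation with the convention that $b$ is that invocation's target size.

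The third and final step is to handle the ordering/indexing: define $(s_{\beta_1},\dots,s_{\beta_K})$ as the first $K$ cells of $\hat{S}\setminus S$ in insertion order, let $S_{<\beta_i}$ denote the prefix of $\hat{S}$ ending just before $s_{\beta_i}$ (consistent with Definition stated in the excerpt), and invoke the dichotomy from step one at each such prefix to conclude \eqref{eqn:gooditems} holds for every $i\in[K]$. Since every cell satisfies the bound, the choice of subsequence is unconstrained beyond having the right length, so no combinatorial selection argument is needed.

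\textbf{Main obstacle.} The genuinely delicate point is not the marginal-gain inequality itself — that is essentially immediate from reading the branch conditions — but rather reconciling the \emph{bookkeeping}: the lemma abuses notation by treating $\hat{S}$ as a sequence concatenated across multiple invocations (per the footnote), yet $K = \alpha b$ uses a single $b$. I expect the cleanest route is to prove the per-invocation statement (one call to Algorithm~\ref{alg:2} with target size $b$, starting set $S$, adding $b - |S|$ cells, of which we extract $K=\alpha b$ good ones — which requires $\alpha b \le b - |S|$, automatically true when $|S|=0$ as in Algorithm~\ref{alg:1}), and then let the proof of Theorem~\ref{thm:2} stitch the rounds together. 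I would also double-check the off-by-one in line~7's condition $|\hat{S}| > \lceil \alpha b\rceil$ versus the claimed count $K = \alpha b$: since cells added while $|\hat{S}| \le \lceil\alpha b\rceil$ are governed by the $\beta$-condition (or are the global max), at least the first $\lceil \alpha b \rceil \ge \alpha b = K$ insertions all satisfy \eqref{eqn:gooditems}, which is exactly what we need. That reconciliation of $\lceil\alpha b\rceil$ with $\alpha b$ is the one spot where a careless reading could break the argument.
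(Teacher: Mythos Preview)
Your final argument --- take the first $\lceil \alpha b\rceil$ insertions, each of which is either the global maximizer $c$ (line~12) or a $c_d$ satisfying the $\beta$-condition (line~9) --- is correct and is exactly what the paper does. However, your stated ``key observation'' is wrong: it is \emph{not} true that whenever a cell is added via line~9 it automatically satisfies \eqref{eqn:gooditems}. Line~9 is also triggered whenever $|\hat{S}| > \lceil \alpha b\rceil$, and in that regime $c_d$ is chosen \emph{regardless} of whether the $\beta$-inequality holds; there is no guarantee at all on $f(c_d\mid\hat S)$ relative to the unrestricted max. So your intermediate claim that ``every single cell added in lines 4--12, regardless of branch'' satisfies the bound is false, and the reason you must restrict to the first $K$ insertions is not ``cleanliness of the downstream argument'' but necessity --- later cells can genuinely fail the inequality.

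You do recover the correct statement in your last paragraph (``cells added while $|\hat{S}| \le \lceil\alpha b\rceil$ are governed by the $\beta$-condition (or are the global max)''), and that is the whole proof; the paper argues it in precisely this way, splitting into the two cases $c_d$ versus $c$ for each of the first $\lceil\alpha b\rceil$ iterations. Your caveat about needing $|S|=0$ for $\alpha b \le b - |S|$ is apt; the paper's proof silently assumes the condition $|\hat S|>\lceil\alpha b\rceil$ is false for the first $\lceil\alpha b\rceil$ iterations, which holds cleanly when $S=\emptyset$ as in Algorithm~\ref{alg:1}.
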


\begin{proof}

In the first $K = \lceil \alpha b \rceil$ iterations of the loop in lines~4--12, the condition $|\hat{S}| > \lceil \alpha b \rceil$ (line 8) evaluates to false. Therefore, in each such iteration, exactly one of the following occurs:
\begin{enumerate}
    \item The condition $f(\hat{S} \cup \{c_d\}) - f(\hat{S}) \ge \beta \cdot \bigl(f(\hat{S} \cup \{c\}) - f(\hat{S})\bigr)$ holds, and $c_d$ is added to $\hat{S}$, or
    \item The condition does not hold, and $c$ is added to $\hat{S}$.
\end{enumerate}

\noindent \textbf{Case (i):} By the definition of conditional marginal gain,
\[
f(c_d \mid \hat{S}) = f(\hat{S} \cup \{c_d\}) - f(\hat{S}) \ge \beta \cdot \bigl(f(\hat{S} \cup \{c\}) - f(\hat{S})\bigr) = \beta \cdot f(c \mid \hat{S}).
\]
Since $c$ is selected in line 4 to maximize marginal gain, we have
\[
f(c \mid \hat{S}) = \max_{y \in V \setminus \hat{S}} f(y \mid \hat{S}).
\]
Therefore, setting $s_{\beta_i} = c_d$, we obtain
\[
f(s_{\beta_i} \mid S_{<\beta_i}) \ge \beta \cdot \max_{y \in V \setminus S_{<\beta_i}} f(y \mid S_{<\beta_i}).
\]

\noindent \textbf{Case (ii):} In this case, we have $s_{\beta_i} = c$. By the selection of $c$ in line 5,
\[
f(s_{\beta_i} \mid S_{<\beta_i}) = f(c \mid S_{<\beta_i}) = \max_{y \in V \setminus S_{<\beta_i}} f(y \mid S_{<\beta_i}) \ge \beta \cdot \max_{y \in V \setminus S_{<\beta_i}} f(y \mid S_{<\beta_i}),
\]
where the final inequality holds since $\beta \le 1$.

Thus, in both cases, the desired inequality holds for each of the first $K$ items added to $\hat{S}$, which form the subsequence $S_{\beta}$.
\end{proof}

The following lemma provides an approximation bound for sequences with a key property.

\begin{lemma}
\label{lem:bound}
Let $S$ be an ordered grid-cell allocation of size $b$, and let $K\in\mathbb{N}_{>0}$.
If there exists a subsequence $S_\beta = (s_{\beta_1}, \ldots, s_{\beta_K})$ of $S$ of length $K$ such that for each $i \in [K]$,
\[
f(s_{\beta_i} \mid S_{<\beta_i}) \ge \beta \cdot \max_{y \in V \setminus S_{<\beta_i}} f(y \mid S_{<\beta_i}),
\]
then
\[
f(S) \ge \left(1 - \left(1 - \frac{\beta}{b}\right)^{K} \right) f(OPT_b).
\]
\end{lemma}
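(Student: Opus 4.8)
The plan is to mimic the classical Nemhauser--Wolsey--Fisher argument for the greedy algorithm, but to track only the $K$ ``good'' elements indexed by $S_\beta$ and to exploit that each of them captures at least a $\beta$-fraction of the best available marginal gain at the time it is inserted. First I would fix notation: write $S_\beta = (s_{\beta_1},\dots,s_{\beta_K})$, and for $i \in [K]$ let $S^{(i)} := S_{\le \beta_i}$ be the prefix of $S$ ending with $s_{\beta_i}$, with $S^{(0)} := S_{<\beta_1}$ (possibly empty, or the already-existing allocation in the multi-round case). Since $f$ is monotone, $f(S) \ge f(S^{(K)})$, so it suffices to lower-bound $f(S^{(K)})$, and in fact to lower-bound $f(S^{(i)}) - f(S^{(i-1)})$ for each $i$.

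The key inequality to establish is the standard one: for each $i \in [K]$,
\begin{equation}
\label{eq:plan-key}
f(OPT_b) - f(S^{(i-1)}) \le \frac{b}{\beta}\,\bigl(f(S^{(i)}) - f(S^{(i-1)})\bigr).
\end{equation}
To prove \eqref{eq:plan-key}, I would use monotonicity and submodularity to write
$f(OPT_b) - f(S^{(i-1)}) \le f(OPT_b \cup S_{<\beta_i}) - f(S_{<\beta_i}) \le \sum_{y \in OPT_b} f(y \mid S_{<\beta_i}) \le b \cdot \max_{y \in V \setminus S_{<\beta_i}} f(y \mid S_{<\beta_i})$,
where the middle step is the usual telescoping-plus-submodularity bound on the marginal value of a set and $|OPT_b| = b$. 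By the hypothesis of the lemma, $\max_{y} f(y \mid S_{<\beta_i}) \le \tfrac{1}{\beta} f(s_{\beta_i} \mid S_{<\beta_i})$, and since all elements of $S$ strictly between $S_{<\beta_i}$ and $S_{\le\beta_i}$ are absent (the $s_{\beta_i}$ are consecutive-in-$S_\beta$ but not necessarily consecutive in $S$), monotonicity gives $f(s_{\beta_i} \mid S_{<\beta_i}) \le f(S^{(i)}) - f(S^{(i-1)})$ --- here one uses that $S^{(i-1)} = S_{\le\beta_{i-1}} \subseteq S_{<\beta_i}$ so the marginal gain of the block from $S^{(i-1)}$ to $S^{(i)}$ dominates the single-element gain $f(s_{\beta_i}\mid S_{<\beta_i})$ by submodularity and monotonicity. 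Combining these yields \eqref{eq:plan-key}.

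From \eqref{eq:plan-key} the rest is routine: setting $\delta_i := f(OPT_b) - f(S^{(i)})$, inequality \eqref{eq:plan-key} rearranges to $\delta_i \le \bigl(1 - \tfrac{\beta}{b}\bigr)\delta_{i-1}$, so by induction $\delta_K \le \bigl(1-\tfrac{\beta}{b}\bigr)^K \delta_0 \le \bigl(1-\tfrac{\beta}{b}\bigr)^K f(OPT_b)$ using $\delta_0 \le f(OPT_b)$ (which holds since $f \ge 0$). Hence $f(S) \ge f(S^{(K)}) = f(OPT_b) - \delta_K \ge \bigl(1 - (1-\tfrac{\beta}{b})^K\bigr) f(OPT_b)$, as claimed. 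The main obstacle is the careful bookkeeping in the step asserting $f(s_{\beta_i}\mid S_{<\beta_i}) \le f(S^{(i)}) - f(S^{(i-1)})$: because the ``good'' elements $s_{\beta_i}$ need not be contiguous in the insertion order of $S$, one must argue cleanly (via monotonicity, so that the intervening non-good insertions only help, together with the fact that $S^{(i-1)}\subseteq S_{<\beta_i}$ and submodularity) that the block marginal gain dominates the single good element's marginal gain. Everything else is the textbook $(1-1/e)$-style telescoping.
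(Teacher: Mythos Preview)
Your overall plan and recurrence $\delta_i \le (1-\beta/b)\,\delta_{i-1}$ are correct and match the paper's argument, and your identified ``main obstacle'' (that the block gain $f(S^{(i)})-f(S^{(i-1)})$ dominates the single-element gain $f(s_{\beta_i}\mid S_{<\beta_i})$) is handled correctly by monotonicity. But there is a genuine gap elsewhere in your justification of \eqref{eq:plan-key}: the first inequality in your chain,
\[
f(OPT_b) - f(S^{(i-1)}) \;\le\; f(OPT_b \cup S_{<\beta_i}) - f(S_{<\beta_i}),
\]
is false in general whenever $S^{(i-1)} \subsetneq S_{<\beta_i}$ (i.e., whenever there are intervening non-good insertions). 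Submodularity gives the \emph{opposite} direction, $f(OPT_b\mid S^{(i-1)}) \ge f(OPT_b\mid S_{<\beta_i})$, so you cannot replace $S^{(i-1)}$ by the larger set $S_{<\beta_i}$ on the right-hand side. A concrete coverage counterexample: take $b=3$, $\beta=0.2$, elements $u,v,w,x$ covering $\{1\},\{2,3,4,5\},\{6\},\{6,\dots,10\}$ respectively, $S=(u,v,w)$ with $S_\beta=(u,w)$; then at $i=2$ the left side is $10-1=9$ while the right side is $10-5=5$.

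The paper avoids this by anchoring the submodular bound at $S_{<\beta_i}$ rather than $S^{(i-1)}$: it shows $f(OPT_b)-f(S_{<\beta_i}) \le \tfrac{b}{\beta}\,f(s_{\beta_i}\mid S_{<\beta_i})$, deduces $\Delta_i \le (1-\beta/b)\bigl(f(OPT_b)-f(S_{<\beta_i})\bigr)$, and \emph{only then} uses monotonicity $f(S^{(i-1)})\le f(S_{<\beta_i})$ to conclude $\Delta_i \le (1-\beta/b)\,\Delta_{i-1}$. Equivalently, you can salvage your version of \eqref{eq:plan-key} directly by splitting the block gain as $f(S^{(i)})-f(S^{(i-1)}) = f(I_i\mid S^{(i-1)}) + f(s_{\beta_i}\mid S_{<\beta_i})$ with $I_i := S_{<\beta_i}\setminus S^{(i-1)}$, and combining $f(s_{\beta_i}\mid S_{<\beta_i}) \ge \tfrac{\beta}{b}\bigl(\delta_{i-1} - f(I_i\mid S^{(i-1)})\bigr)$ with $f(I_i\mid S^{(i-1)})\ge 0$ and $1-\beta/b\ge 0$. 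Either route closes the gap; the rest of your argument then goes through as written.
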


\begin{proof}
Let $S_{\beta} = (s_{\beta_1}, \ldots, s_{\beta_K})$ be the subsequence from the lemma assumption.
Let $S_{\leq \beta_i}$ denote the prefix of $S$ with $s_{\beta_i}$ as its last grid-cell.
We denote $S_{\leq \beta_0} = \emptyset$ as the initial empty set.
Define the gap to optimality as:
\[
\Delta_i := f(OPT_b) - f(S_{\leq \beta_i}),
\quad i = 0,\ldots,K.
\]
Clearly, $\Delta_0 = f(OPT_b) - f(\emptyset) = f(OPT_b)$.

Fix any $i \in \{0,\ldots,K-1\}$.
Since some elements may have been added after $s_{\beta_i}$ and before $s_{\beta_{i+1}}$, $S_{\leq \beta_i}$ is a prefix of $S_{< \beta_{i+1}}$. 
By monotonicity, $f(S_{< \beta_{i+1}}) \ge f(S_{\leq \beta_i})$, and therefore the gap can only decrease:
\begin{equation}
\label{eq:mono}
f(OPT_b) - f(S_{< \beta_{i+1}}) \le \Delta_i.    
\end{equation}

Now we apply the standard greedy bound. By submodularity and monotonicity,
\[
f(OPT_b) - f(S_{< \beta_{i+1}}) \le b \cdot \max_{y \in V\setminus S_{< \beta_{i+1}}} f(y \mid S_{< \beta_{i+1}}).
\]
Since $s_{\beta_{i+1}} \in S_\beta$, by the lemma assumption,
\begin{equation}
\label{eq:9}
f(s_{\beta_{i+1}} \mid S_{< \beta_{i+1}}) \ge \beta \cdot \max_{y \in V\setminus S_{< \beta_{i+1}}} f(y \mid S_{< \beta_{i+1}}) \ge \frac{\beta}{b} \bigl( f(OPT_b) - f(S_{< \beta_{i+1}}) \bigr).
\end{equation}

We can now bound the gap after selecting $s_{\beta_{i+1}}$:
\begin{align*}
\Delta_{i+1} &= f(OPT_b) - f(S_{\leq \beta_{i+1}}) \\
&= f(OPT_b) - \bigl( f(S_{< \beta_{i+1}}) + f(s_{\beta_{i+1}} \mid S_{< \beta_{i+1}}) \bigr) && \text{(by definition)}\\
&= \bigl( f(OPT_b) - f(S_{< \beta_{i+1}}) \bigr) - f(s_{\beta_{i+1}} \mid S_{< \beta_{i+1}}) && \text{(rearranging)}\\
&\le \Bigl( 1 - \frac{\beta}{b} \Bigr) \bigl( f(OPT_b) - f(S_{< \beta_{i+1}}) \bigr) && \text{(by \eqref{eq:9})}\\
&\le \Bigl( 1 - \frac{\beta}{b} \Bigr) \Delta_i. && \text{(by \eqref{eq:mono})}
\end{align*}

Iterating this inequality for $i=0,\ldots,K-1$ yields:
\[
\Delta_K := f(OPT_b) - f(S_{\leq \beta_K})
\le \Bigl(1 - \frac{\beta}{b}\Bigr)^K f(OPT_b).
\]

Rearranging,
\[
f(S_{\leq \beta_K})
\ge \Bigl(1 - \Bigl(1 - \frac{\beta}{b}\Bigr)^K \Bigr) f(OPT_b).
\]

Since $S_{\leq \beta_K} \subseteq S$ (as $S_\beta$ is a subsequence of $S$), by monotonicity, $f(S) \ge f(S_{\leq \beta_K})$. Therefore,
\[
f(S)
\ge
\Bigl(1 - \Bigl(1 - \frac{\beta}{b}\Bigr)^K \Bigr) f(OPT_b).
\]
\end{proof}

We are now ready to prove the Theorem~\ref{thm:2}.

\begin{proof}
Let $b_1, \ldots, b_t$ denote the budgets for Algorithm~\ref{alg:3}.
For each call to Algorithm~\ref{alg:2} in line~9 of Algorithm~\ref{alg:3} with $r = \ell$ and $i = \text{limit}$, Lemma~\ref{lem:alg2connect} guarantees the existence of a subsequence 
\[
S^\ell_\beta = (s^\ell_{\beta_1}, s^\ell_{\beta_2}, \ldots, s^\ell_{\beta_{\alpha \cdot b_\ell}})
\]
of $\hat{S} \setminus S$ such that for every $j \in [\lceil\alpha \cdot b_\ell\rceil]$, the marginal gain satisfies:
\[
f(s^\ell_{\beta_j} \mid s^\ell_{<\beta_j}) \ge \beta \cdot \max_{y \in V \setminus s^\ell_{<\beta_j}} f(y \mid s^\ell_{<\beta_j}),
\]
where $s^\ell_{<\beta_j}$ denotes the prefix $(s^\ell_{\beta_1}, \ldots, s^\ell_{\beta_{j-1}})$.

We construct the sequence $S_\beta = S^1_\beta \cdot S^2_\beta \cdot \ldots \cdot S^t_\beta$ by concatenating these subsequences. 
By the guarantee above, each grid-cell in $S_\beta$ satisfies the marginal gain condition relative to the grid-cells preceding it in the concatenated sequence.
The total size of $S_\beta$ is:
\[
|S_\beta| = \sum_{\ell=1}^{t} |S^\ell_\beta| = \sum_{\ell=1}^{t} \lceil \alpha \cdot b_\ell \rceil \ge \sum_{\ell=1}^{t} \alpha \cdot b_\ell = \alpha b.
\]
Since $S_\beta$ is a subsequence of the output $S$ returned by Algorithm~\ref{alg:3}, the conditions of Lemma~\ref{lem:bound} are satisfied for $S$.
Applying Lemma~\ref{lem:bound}, with $K=\alpha b$ we obtain:
\[
f(S) \ge \left(1 - \left(1 - \frac{\beta}{b}\right)^{\alpha b} \right) f(OPT_b).
\]
Finally, using the inequality $(1-x)^k \le e^{-xk}$ for $x \in [0,1]$, we conclude that:
\[
f(S) \ge \left(1 - e^{-\alpha b} \right) \cdot f(OPT_b).
\]
    
\end{proof}

\section{Running details}
Since the most computationally intensive component of our approach is the LLM calls, which were performed via the Gemini API, all experiments were conducted on a machine with 16 GB of RAM and an Intel i7 8-core CPU. 
Experiments were implemented in Python 3.13 and repeated five times for consistency.

One challenge in our approach is ensuring that the LLM respects the minimum allocation constraints. 
While this issue was infrequent when the minimum allocation values were small, it became more pronounced as they increased. 
To address this, we allow Step 2 to attempt finding a feasible allocation multiple times when necessary.

\section{Additional experimental results}

In Figure~\ref{fig:window} we examine whether incorporating
a longer feedback history improves the performance of verbal refinement. 
The experiment details appear in Section~\ref{sec:wind} of the main paper.

\begin{figure}
    \centering
    \includegraphics[width=0.7\textwidth]{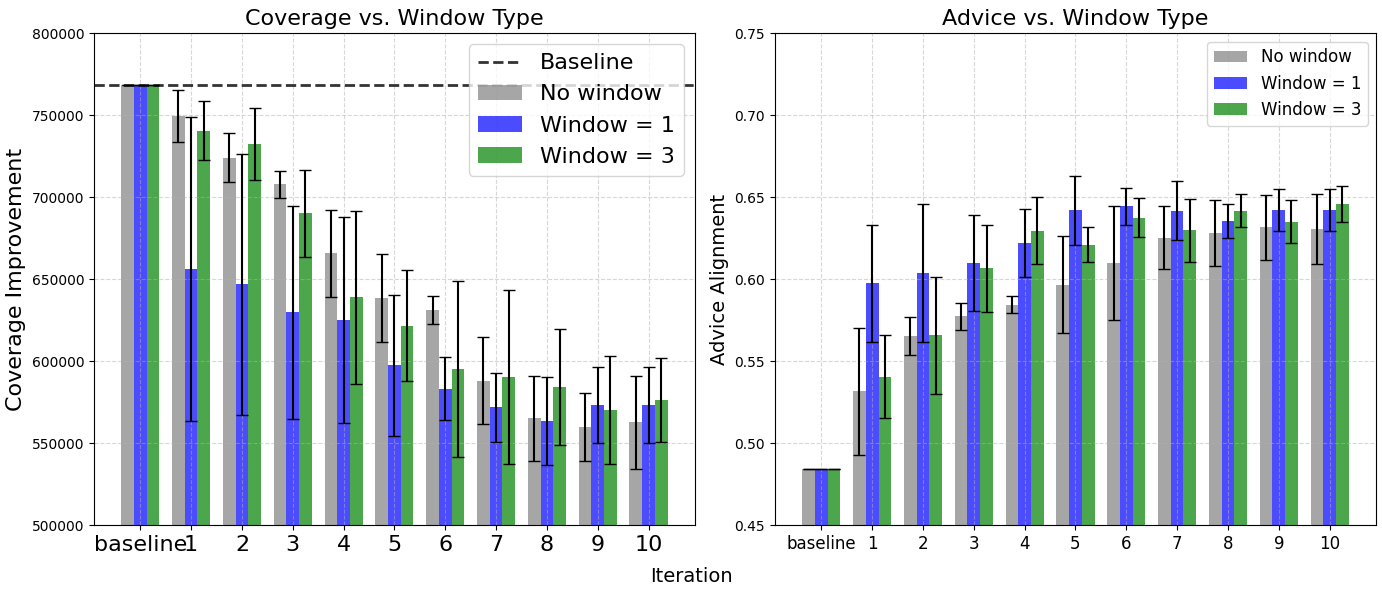}
    \includegraphics[width=0.7\textwidth]{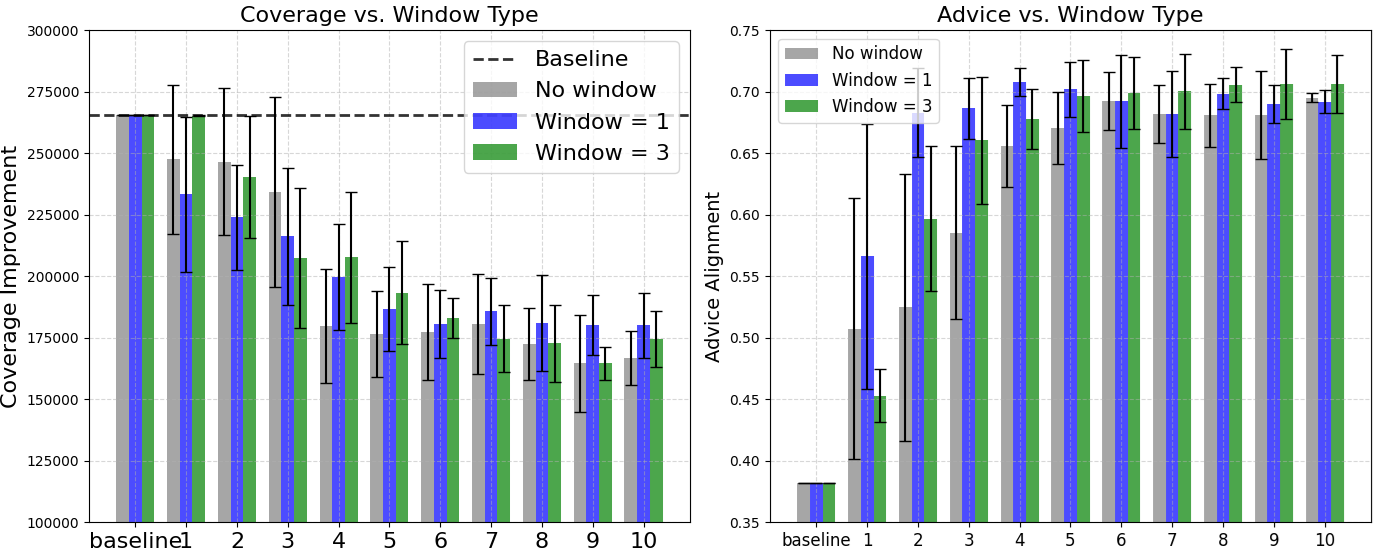}
    \caption{History length to consider in Verbal Feedback for Afar (top) and Benishangul Gumuz: 1 previous iteration vs 3 previous iterations. We considered $\alpha=0.25$ and $\beta=1.0$.}
    \label{fig:window}
    \Description{A bar chart}
\end{figure}

%%%%%%%%%%%%%%%%%%%%%%%%%%%%%%%%%%%%%%%%%%%%%%%%%%%%%%%%%%%%%%%%%%%%%%%%

\end{document}